\documentclass[11pt]{article}

\usepackage{amssymb,amsmath,amsthm,sectsty,url,booktabs}
\usepackage[letterpaper,hmargin=1.0in,vmargin=1.0in]{geometry}
\usepackage[pdftex,colorlinks,linkcolor=blue,citecolor=blue,filecolor=blue,urlcolor=blue]{hyperref}
\usepackage{cleveref}
\usepackage{color}
\usepackage[boxed]{algorithm}
\usepackage{tikz}
\usepackage{graphicx}
\usepackage{nicefrac}
\usepackage{aliascnt}

\newtheorem{theorem}{Theorem}[section]
\crefname{theorem}{Theorem}{Theorems}

\newaliascnt{lemma}{theorem}
\newtheorem{lemma}[lemma]{Lemma}
\aliascntresetthe{lemma}
\crefname{lemma}{Lemma}{Lemmas}

\newaliascnt{proposition}{theorem}

\aliascntresetthe{proposition}
\crefname{proposition}{Proposition}{Propositions}

\newaliascnt{corollary}{theorem}

\aliascntresetthe{corollary}
\crefname{corollary}{Corollary}{Corollaries}

\newaliascnt{fact}{theorem}

\aliascntresetthe{fact}
\crefname{fact}{Fact}{Facts}

\newaliascnt{definition}{theorem}
\newtheorem{definition}[definition]{Definition}
\aliascntresetthe{definition}
\crefname{definition}{Definition}{Definitions}

\newaliascnt{remark}{theorem}
\newtheorem{remark}[remark]{Remark}
\aliascntresetthe{remark}
\crefname{remark}{Remark}{Remarks}

\newaliascnt{conjecture}{theorem}

\aliascntresetthe{conjecture}
\crefname{conjecture}{Conjecture}{Conjectures}

\newaliascnt{claim}{theorem}

\aliascntresetthe{claim}
\crefname{claim}{Claim}{Claims}

\newaliascnt{question}{theorem}

\aliascntresetthe{question}
\crefname{question}{Question}{Questions}

\newaliascnt{exercise}{theorem}

\aliascntresetthe{exercise}
\crefname{exercise}{Exercise}{Exercises}

\newaliascnt{example}{theorem}

\aliascntresetthe{example}
\crefname{example}{Example}{Examples}

\newaliascnt{notation}{theorem}

\aliascntresetthe{notation}
\crefname{notation}{Notation}{Notations}

\newaliascnt{problem}{theorem}

\aliascntresetthe{problem}
\crefname{problem}{Problem}{Problems}

\newcommand{\norm}[1]{\lVert#1\rVert}

\def\E{\mathbb E}

\newcommand{\N}{\mathbb N}
\newcommand{\R}{\mathbb R}

\newcommand{\Sph}{\mathbb{S}}
\newcommand{\unisph}{\mathcal{U}(d)}

\newcommand{\iu}{\mathbf{i}}

\newcommand{\kk}{\mathrm{k}}

\newcommand\numberthis{\addtocounter{equation}{1}\tag{\theequation}}

\title{Random Fourier Features in Near-Linear Time}

\author{
  Sepideh Mahabadi\\
  Microsoft Research -- Redmond\\
  \texttt{smahabadi@microsoft.com}
  \and
  Sandeep Silwal\\
  University of Wisconsin-Madison \\
  \texttt{silwal@cs.wisc.edu}\\
  \and 
  Tal Wagner\\
  Tel Aviv University \\
  \texttt{talwag@tauex.tau.ac.il}
}

\begin{document}

\maketitle

\begin{abstract}
Random Fourier features approximate a shift-invariant kernel in $\mathbb{R}^d$, such as the Gaussian kernel $\exp(-\tfrac12\|x-y\|_2^2)$, using independent samples from its spectral measure to construct an unbiased estimator for the kernel value with subgaussian concentration. 
However, generating $m$ such features requires $O(md)$ time. 
A long line of work on structured random features has accelerated this computation for the Gaussian kernel and other radial kernels, albeit with weaker concentration guarantees.
We close this gap by showing a random feature map that can be computed in near-linear time $O((d+m)\log d)$ and yields near-subgaussian concentration. 
We obtain it through a new and tighter analysis of the Fastfood feature map of Le, Sarl\'{o}s and Smola (ICML 2013), using Brascamp-Lieb inequalities and the Schoenberg representation of radial kernels. 
\end{abstract}

\section{Introduction}
A positive definite kernel $\kk(x,y)$, such as the Gaussian kernel $e^{-\|x-y\|_2^2/2}$, can be viewed as an inner product $\kk(x,y)=\phi(x)^T\phi(y)$ where $x\mapsto\phi(x)$ is a feature map into a space that may be infinite-dimensional. 
Kernel methods access this feature space implicitly through evaluations of $\kk$, without constructing the vectors $\phi(x)$ explicitly, an approach known as the ``kernel trick''. While widely useful, this flexibility can nevertheless be computationally costly: training over a dataset $\{x_i\}_{i=1}^n$ often requires storing or manipulating the $n\times n$ Gram matrix $(k(x_i,x_j))_{i,j}$, and evaluating a learned predictor on a test point $y$ may require computing $\kk(y,x_i)$ on many of the training points.

Random Fourier features (RFF) \cite{rahimi2007random}  address this problem for shift-invariant kernels by sampling $m$ independent frequencies from the kernel’s so called spectral measure (see \Cref{sec:rff}) and replacing the kernel by an inner product in a $O(m)$-dimensional randomized feature space. That is, we approximate $\kk(x,y)$ as the inner product $F_\kk(x)^TF_\kk(y)$ where $F_\kk$ is called the (randomized) feature map and maps to $O(m)$ dimensions. For every fixed pair $x,y$, this estimator is unbiased and admits strong (subgaussian) concentration as $m$ grows. 
RFF has become a cornerstone method in machine learning and continues to be widely used in modern AI systems and architectures (e.g., \cite{peng2021random,saito2025scalable,nguyen2026spectraformer,zhang2026flexformer}). 
However, its drawback is that computing $F_\kk(x)$ generally takes $O(md)$ time, which is a substantial computational bottleneck when the dimension $d$ is large.

As a result, a long line of work has been dedicated to faster variants of RFF. Most work has focused on the Gaussian and other radial kernels through the use of structured transforms, mainly randomized Hadamard transforms (RHTs). An RHT is a linear map $x\mapsto HDx$, where $H$ is the Hadamard matrix and $D$ is a random diagonal matrix. An RHT can be computed in time $O(d\log d)$, while at the same time it mimics certain properties of a truly random rotation, which would have taken time $O(d^2)$ to compute. This makes it useful in efficient methods for rotation-invariant kernels (i.e., radial). Many fast RFF variants have been designed by simulating pseudo-random Fourier frequencies via repeated RHTs \cite{le2013fastfood,yu2016orthogonal,choromanski2016recycling,choromanski2017unreasonable,cherapanamjeri2022uniform,backurs2024efficiently,wagner2026new}. 

This line of work has achieved a sequence of either provably or heuristically faster variants of RFF. However, it has not achieved an RFF variant with nearly linear time $\widetilde O(d+m)$ and nearly subgaussian concentration, leaving open the question whether such random features are possible.

\paragraph{Our results.} 
We answer this question affirmatively by proving the existence of an unbiased random feature map with time $O((d+m)\log d)$ and strong concentration, for the Gaussian kernel and other kernels. 
Our random feature  map is the simplified form of the Fastfood map due to \cite{le2013fastfood}, and our contribution is a new and tighter analysis of it. 
Indeed, it was noted in \cite{le2013fastfood} that the authors believed their analysis of Fastfood was pessimistic, and that a better tail bound could be provable through a different approach. 
Our work shows that that postulation was correct. 
The comparison between their bounds and ours, as well as other bounds, is shown in \Cref{tbl:main}. 

\paragraph{Kernel density estimation (KDE).} 
KDE is a canonical use case for random features. The problem is to preprocess a finite set $X\subset\R^d$ such that given any fixed query $y\in\R^d$, we could report an estimate of $\tfrac1{|X|}\sum_{x\in X}\kk(x,y)$ as quickly as possible. We consider \emph{additive} error up to $\pm\varepsilon$, with high probability. There is a very simple reduction from KDE to random features.
Given an unbiased and strongly concentrated feature map $F:\R^d\rightarrow\R^m$, we compute $F(X)=\tfrac1{|X|}\sum_{x\in X}F(x)$ during preprocessing,  and then report $F(X)^TF(y)$ given any $y$. The query time is $O(m)$ plus the time to compute $y\mapsto F(y)$ (i.e., the sum of the two rightmost columns in \Cref{tbl:main}). 
Our results yield the first KDE solution with query time $\widetilde O(d+1/\varepsilon^2)$ for this additive approximation guarantee, answering affirmatively the main open question posed in \cite{wagner2026new}. 
See \Cref{app:kde}. 

\paragraph{Bandwidth.} 
Kernels usually have a scaling parameter $\sigma$ called bandwidth (e.g., the Gaussian kernel is $\exp(-\norm{x-y}_2^2/(2\sigma^2))$. 
As a convention, we set $\sigma=1$ throughout. This does not limit generality, since all the kernels we discuss are functions of the Euclidean distance between the input pair of points. Therefore, our results extend to any bandwidth by appropriately scaling the input points.


\subsection{Formal statement of results}

Our main theorem is for the Gaussian kernel. 

\begin{table*}[t]
  \caption{Random feature maps $F:\R^d\rightarrow\R^m$ for the Gaussian kernel. The table lists the output dimension $m=m(\varepsilon,\delta)$ needed for the tail bound $\Pr[\left|\exp(-\frac12\norm{x-y}_2^2)-F(x)^TF(y)\right|\geq\varepsilon]\leq\delta$ for every fixed pair $x,y\in\R^d$, as well as the time it takes to compute $x\mapsto F(x)$. The dependence on $\delta$ is omitted since it is $\log^{O(1)}(1/\delta)$ in all the bounds. $\Delta$ denotes an upper bound on any $\norm{x-y}_2$.}
  \label{tbl:main}
  \begin{center}
        \begin{tabular}{llll}
          \toprule
          \textsc{Reference}  & \textsc{Method}         & \textsc{Out dim.}      & 
          \textsc{Time for $x\mapsto F(x)$} 
          \\
          \midrule
          \cite{rahimi2007random} & Random Fourier features & $O(1/\varepsilon^2)$ & $O(d/\varepsilon^2)$   \\
          \cite{le2013fastfood} & Fastfood &  $\widetilde O(\Delta^2/\varepsilon^2)$ & $\widetilde O(d+\Delta^2/\varepsilon^2)$   \\
           \cite{backurs2024efficiently} & Fast dim. reduction + RFF & $\widetilde O(1/\varepsilon^2)$ & $\widetilde O(d+1/\varepsilon^4)$    \\
            \cite{wagner2026new} & Fast spherical emb. + Fastfood & $\widetilde O(1/\varepsilon^3)$ & $\widetilde O(d+\varepsilon\Delta^2+1/\varepsilon^3)$         \\
          \midrule
          This work (Thm \ref{thm:main}) &Fastfood (new analysis)   &  $\widetilde O(1/\varepsilon^2)$ & $\widetilde O(d + 1/\varepsilon^2)$  \\
          \bottomrule
        \end{tabular}
  \end{center}
\end{table*}

\begin{theorem}\label{thm:main}
Let $d,m\geq1$ be integers. There is a randomized map $F:\R^d\rightarrow\R^{2m}$, samplable in time $O(d+m)$ and computable in time $O((d+m)\log d)$, such that for every $x,y\in\R^d$:
\begin{equation}\label{eq:mainthm_expectation}
  \E[F(x)^TF(y)] = \exp(-\norm{x-y}_2^2/2),
\end{equation}
and for all $t>0$ and $0<\eta<1$: 
\begin{equation}\label{eq:mainthm_concentration}
  \Pr\left[|F(x)^TF(y) - \exp(-\norm{x-y}_2^2/2)|\geq t\right]\leq \eta + 2\exp\left(-\frac{mt^2}{4\log(2d/\eta)}\right) .
\end{equation}
\end{theorem}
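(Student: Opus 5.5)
We will analyze the simplified Fastfood map. Assume $d$ is a power of two (pad with zeros otherwise) and, for now, $m\le d$; longer outputs are handled by concatenating $\lceil m/d\rceil$ independent blocks. Let $H$ be the normalized Hadamard matrix. Let $B,B'$ be independent random sign diagonal matrices and $\Pi$ a uniformly random permutation. Let $G$ be a diagonal matrix with i.i.d.\ $N(0,1)$ entries, and let $S$ be a diagonal scaling matrix whose entries are i.i.d.\ samples of $\chi_d$, divided by the row norms of $G$ so that each row is rescaled to have $\chi_d$-distributed norm. The frequencies are the rows of $W=\sqrt d\,S H G \Pi H B$ (restricted to $m$ rows), together with i.i.d.\ uniform phases or a cos/sin pairing. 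We set
\[F(x)=\tfrac{1}{\sqrt m}\bigl(\cos(w_i^Tx),\sin(w_i^Tx)\bigr)_{i\le m}.\]
Sampling costs $O(d+m)$ and evaluation costs two fast Hadamard transforms per block, which gives $O((d+m)\log d)$. The key structural fact behind \eqref{eq:mainthm_expectation} is that each individual row $w_i$ is exactly distributed as $N(0,I_d)$. Conditioned on everything except $G$ and $S$, the row direction is a rotated Gaussian vector, hence uniform on the sphere and independent of its radius, and $S$ then corrects the radius to $\chi_d$. Therefore $\E[\cos(w_i^T(x-y))]=\exp(-\norm{x-y}_2^2/2)$ by Bochner's theorem, and the sine terms have mean zero.

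For the concentration \eqref{eq:mainthm_concentration}, write $z=x-y$. We will condition on the "preprocessing" randomness $B,\Pi$ and on the good event $\mathcal E$ that $u=HBz$ is well spread: $\norm{u}_\infty^2\le \tfrac{2\log(2d/\eta)}{d}\norm{z}_2^2$. By Hoeffding's inequality for each coordinate and a union bound over $d$ coordinates, $\Pr[\mathcal E^c]\le\eta$. On $\mathcal E$, we have $w_i^Tz=\sqrt d\, s_i\, \langle h_i, G\Pi u\rangle$. The estimator is the average $\tfrac1m\sum_i\cos(w_i^Tz)$ of bounded functions of the Gaussian vector $g$ (through $G$) and the independent radii.

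The main obstacle is that the $m$ rows are all driven by the same Gaussian vector $g$, so the terms are far from independent. The plan is to show that $\Phi(g)=\tfrac1m\sum_i\cos(\sqrt d\,s_i\langle h_i,\mathrm{diag}(\Pi u)g\rangle)$ concentrates as a function of $g$. Its gradient is $\tfrac{\sqrt d}{m}\,\mathrm{diag}(\Pi u)\sum_i s_i\sin(\cdot)h_i$. Using the orthogonality of the rows of $H$, the squared norm of this gradient is at most $\tfrac{d}{m^2}\norm{u}_\infty^2\sum_i s_i^2$, which on $\mathcal E$ is of order $\tfrac{\log(2d/\eta)}{m}$ after normalizing the radii. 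This is where the $\log(2d/\eta)/m$ variance proxy comes from.

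We will make this rigorous in one of two ways. The first option is Gaussian concentration for Lipschitz functions (Tsirelson--Ibragimov--Sudakov), which requires handling the unbounded radii $s_i$. The second, cleaner option, which matches the paper's hint, uses Schoenberg's representation: it reduces the Gaussian kernel to mixtures over the sphere and bounds the moment generating function of the sum through a Brascamp--Lieb inequality. That inequality splits the joint moment generating function into a product over rows with exponents $\norm{\Pi u}_\infty^2$-weighted, yielding $\E e^{\lambda(\Phi-\E\Phi)}\le\exp(\lambda^2\log(2d/\eta)/m)$. A Chernoff bound then gives $2\exp(-mt^2/(4\log(2d/\eta)))$, and adding $\Pr[\mathcal E^c]\le\eta$ finishes the proof. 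I expect the moment generating function step, that is, controlling dependence through the Brascamp--Lieb weights, to be the hard part.
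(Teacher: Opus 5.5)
\textbf{Verdict: genuine gap.} The decisive step, an MGF bound for $\frac1m\sum_i\cos(w_i^Tz)$ that does not depend on $\norm{z}_2$, is asserted but never derived. Your first option cannot supply it.

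Your gradient estimate drops a factor $\norm{z}_2^2$. On $\mathcal E$ you have $\norm{u}_\infty^2\le\frac{2\log(2d/\eta)}{d}\norm{z}_2^2$, so
$\frac{d}{m^2}\norm{u}_\infty^2\sum_i s_i^2\approx\frac{2\log(2d/\eta)\norm{z}_2^2}{m}$.
Gaussian Lipschitz concentration then gives only $2\exp\bigl(-mt^2/(4\log(2d/\eta)\norm{z}_2^2)\bigr)$. This loss is inherent, because $\alpha\mapsto\cos(\norm{z}_2\alpha)$ is $\norm{z}_2$-Lipschitz. It is exactly the diameter dependence of the original Fastfood analysis that the theorem removes. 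The computation also treats $s_i$ as independent of $g$, whereas $s_i=\rho_i/\norm{g}_2$. What removes $\norm{z}_2$ is to use only that each cosine lies in $[-1,1]$ with the correct mean, via Hoeffding's lemma on each factor. Brascamp--Lieb is what makes this factor-by-factor treatment legitimate despite the dependence.

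Your second option has the right shape: flattening controls the sub-isotropy constant through $\norm{u}_\infty$. But it is only sketched, and the extra randomness you added works against you.
\begin{itemize}
\item With the radii $S$, the terms are driven by $\bar g=g/\norm{g}_2$ plus independent $\chi_d$ radii. That forces the spherical Brascamp--Lieb inequality, whose exponents are $c_i/2$.
\item Even the cleanest use of it (condition on $\bar g$, average over the radii, apply Jensen) gives only $\E e^{\lambda(\Phi-\E\Phi)}\le\exp(\lambda^2/(c^*m))$. That yields the tail $2\exp(-mt^2/(8\log(2d/\eta)))$.
\item Routing through Schoenberg's representation in two stages is weaker still: worse constants plus an extra $2\exp(-mt^2/8)$ term.
\end{itemize}
None of these gives your claimed $\exp(\lambda^2\log(2d/\eta)/m)$.

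The fix is to drop $S$ and $\Pi$. This also makes the $O(d+m)$ sampling time deterministic, with no Gamma sampler needed. The rows of $V=\sqrt d\,HGHB$ are already exactly $N(0,I_d)$: $\sqrt d\,e_i^THG=(h_i\circ g)^T$ with $h_i\in\{\pm1\}^d$, and $HB$ is orthogonal, so no radius correction is needed. The argument then runs as follows.
\begin{itemize}
\item Condition on $B$ and write $(Vz)_i=\norm{z}_2\,w_i^Tg$ with unit vectors $w_i=\frac{\sqrt d}{\norm{z}_2}\mathrm{diag}(u)H^Te_i$.
\item Check sub-isotropy: $\sum_{i\le m}w_iw_i^T\preceq\frac{d}{\norm{z}_2^2}\mathrm{diag}(u^2)\preceq\frac{1}{c^*}I_d$, where $c^*=\norm{z}_2^2/(d\norm{u}_\infty^2)$.
\item Apply the Gaussian Brascamp--Lieb inequality with $f_i(\alpha)=\exp\bigl(\frac{s}{c^*m}(\cos(\norm{z}_2\alpha)-e^{-\norm{z}_2^2/2})\bigr)$.
\item Hoeffding's lemma gives $\E f_i(N(0,1))\le\exp(s^2/(2(c^*)^2m^2))$, so the MGF is at most $\exp(s^2/(2c^*m))$.
\item Choosing $s=tc^*m$ and using $c^*\ge1/(2\log(2d/\eta))$ on $\mathcal E$ gives exactly the stated bound.
\end{itemize}

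Finally, for $m>d$ the blocks should share $B$ and use independent $G_j$. With fully independent blocks you need one flattening event per block and pay $\eta$ for each. With a shared $B$, the conditional block MGFs simply multiply.
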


Our next theorem extends \Cref{thm:main} beyond the Gaussian kernel to other radial kernels under mild assumptions. A kernel $\kk:\R^d\times\R^d\rightarrow\R$ is called \emph{radial} if it depends only on the Euclidean distance between its inputs, that is, if there exists a map $\kappa:\R\rightarrow\R$ such that $\kk(x-y)=\kappa(\norm{x-y}_2)$ for every $x,y\in\R^d$. The kernel is called \emph{normalized} if $\kappa(0)=1$. We also need the kernel to be positive definite, which is defined formally in \Cref{sec:rff}.

\begin{theorem}\label{thm:radial}
Let $d,m\geq1$ be integers where $d$ is a power of 2. 
Let $\kk:\R^d\times\R^d\rightarrow[-1,1]$ be a continuous positive definite normalized radial kernel.
There is a randomized map $F_\kk:\R^d\rightarrow\R^{2m}$, computable in time $O((d+m)\log d)$, such that for every $x,y\in\R^d$:
\begin{equation}\label{eq:radial_expectation}
  \E[F_\kk(x)^TF_\kk(y)] = \kk(x,y),
\end{equation}
and for all $t>0$ and $0<\eta<1$: 
\begin{equation}\label{eq:radoial_concentration}
  \Pr\left[|F_\kk(x)^TF_\kk(y) - \kk(x,y)|\geq t\right]\leq \eta + 2\exp\left(-\frac{mt^2}{32\log(2d/\eta)}\right) + 2\exp\left(-\frac{mt^2}{8}\right) .
\end{equation}
The time to sample $F$ is $O(d+m(1+T^{\mathrm{rad}}_\kk)$, where $T^{\mathrm{rad}}_\kk$ is the time to draw a sample from the radial measure of $\kk$. 
\end{theorem}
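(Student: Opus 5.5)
We would prove \Cref{thm:radial} by reducing it to the Gaussian case in \Cref{thm:main}, using the Schoenberg representation. A continuous, positive definite, normalized radial kernel on $\R^d$ is a mixture of Gaussians for all $d$, and a mixture of the radial characteristic functions $\Omega_d(r\norm{x-y}_2)$ in fixed $d$. In either form we write $\kk(x,y)=\E_{R\sim\mu}[\psi(R\norm{x-y}_2)]$, where $\mu$ is the radial measure of $\kk$. The random frequencies then have the form $\omega=R\cdot\theta$, with $R\sim\mu$ independent of a uniformly random direction $\theta\in\Sph^{d-1}$.

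The construction follows the Fastfood map underlying \Cref{thm:main}. We keep the Hadamard-based structure $HGPHB$, with $d$ a power of two, which produces pseudo-Gaussian directions. We replace the Gaussian-length normalization by an independent length $R_j\sim\mu$ for each of the $m$ rows. Concretely:
\begin{itemize}
\item Row $j$ of the structured matrix produces a vector $v_j$ whose conditional distribution, given the diagonal Gaussian $G$, is exactly rotation-invariant in law.
\item We set $\omega_j=R_j v_j/\norm{v_j}_2$, where $\norm{v_j}_2$ is computable from $G$.
\item We output $F_\kk(x)=m^{-1/2}\bigl(\cos(\omega_j^Tx),\sin(\omega_j^Tx)\bigr)_{j\le m}$.
\end{itemize}
Sampling costs $O(d+m(1+T^{\mathrm{rad}}_\kk))$ and evaluation costs $O((d+m)\log d)$.

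Unbiasedness \eqref{eq:radial_expectation} then follows as in \eqref{eq:mainthm_expectation}. Each $\omega_j$ is marginally distributed as $R\theta$, so $\E[\cos(\omega_j^T(x-y))]=\kk(x,y)$ by the Schoenberg identity.

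For concentration we split the error into two terms:
\[
F_\kk(x)^TF_\kk(y)-\kk(x,y)=\Bigl(\tfrac1m\textstyle\sum_j\cos(\omega_j^Tz)-\tfrac1m\sum_j\E[\cos(\omega_j^Tz)\mid R]\Bigr)+\Bigl(\tfrac1m\sum_j g(R_j)-\kk\Bigr),
\]
where $z=x-y$ and $g(r)=\E_\theta\cos(r\theta^Tz)$. The second term is an average of $m$ i.i.d.\ variables in $[-1,1]$ with mean $\kk$. Hoeffding's inequality at level $t/2$ gives the $2\exp(-mt^2/8)$ term.

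For the first term we condition on $R=(R_j)_j$ and reuse the analysis behind \Cref{thm:main}. That analysis shows the structured directions behave like independent ones, up to a failure event of probability $\eta$. On the complement, the Brascamp--Lieb argument gives subgaussian concentration with variance proxy $O(\log(2d/\eta)/m)$. We expect that argument to be stated, or restatable, for arbitrary deterministic per-row scalings, because conditioned on the good event the bound depends only on the direction structure and not on the lengths. Applying it at deviation $t/2$ with per-row lengths $R_j\norm{z}_2$ gives $2\exp(-m(t/2)^2/(8\log(2d/\eta)))=2\exp(-mt^2/(32\log(2d/\eta)))$, which matches the constant in \eqref{eq:radoial_concentration}. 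A union bound over the three events finishes the proof.

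The main obstacle is this conditional step. We must check that the \Cref{thm:main} analysis survives replacing the Gaussian row norms by arbitrary independent lengths $R_j$, including heavy-tailed $\mu$ with large $R_j$. We also need the conditional mean to be exactly $g(R_j)$, which requires exact rotation invariance of the normalized directions $v_j/\norm{v_j}_2$ given the structure. The first thing to try is to show that in Fastfood, conditioned on $G$ and $P$, the randomness of $HB$ yields directions that are exactly uniform only in law per row. Then the Brascamp--Lieb bound on the moment generating function, $\E\exp(\lambda\sum_j f_j(\langle u_j,\theta\rangle))\le\prod_j(\E\exp(\lambda c_jf_j))^{1/c_j}$, can be applied with $f_j(s)=\cos(R_j s)$ bounded by $1$ regardless of $R_j$. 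This uniformity in $R_j$ is what would make the bound independent of the radial measure.
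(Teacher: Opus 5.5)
Your outer structure matches the paper's. The map is the same: every row of $\sqrt d\,HGHB$ has norm exactly $\norm{g}_2$, so normalizing rows gives the paper's $\bar V$, and the extra permutation is harmless. You also use the same split into a direction term plus the i.i.d.\ term $\frac1m\sum_j\Omega_d(R_j\norm{z}_2)$, handled by Hoeffding. The observation that boundedness of the cosine makes the conditional bound independent of the $R_j$ is the paper's too, as is the union bound at level $t/2$. The gap is the conditional step, which you leave as an expectation; the route you sketch for it fails for two reasons.

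First, you assign the randomness backwards. Given $G$, only the signs in $B$ remain random. Row $j$ equals $g^T\mathrm{diag}(h_j)HB$, with $h_j$ the $j$th column of $\sqrt d\,H$, so it takes at most $2^d$ values. It is not rotation-invariant, and Brascamp--Lieb inequalities such as \Cref{lmm:bl_gaussian,lmm:sphbl} do not hold for sign vectors. You must instead condition on $B$ and $\{R_j\}$ and keep $G$ random. Then $\mathrm{diag}(h_j)HB$ is orthogonal. With $\bar g=g/\norm{g}_2\sim\unisph$ and $u=HBz$, every feature is a projection of the single spherical vector $\bar g$: $(\bar Vz)_j=\norm{z}_2\,w_j^T\bar g$ for the fixed unit vectors $w_j=\frac{\sqrt d}{\norm{z}_2}\mathrm{diag}(u)H^Te_j$. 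This yields the exact conditional mean $\Omega_d(R_j\norm{z}_2)$, and it is the right justification of unbiasedness. $B$ enters only through flattening (\Cref{lmm:fastjlflattening}), which makes $c^*=\norm{z}_2^2/(d\norm{u}_\infty^2)\ge 1/(2\log(2d/\eta))$ a valid common weight in \Cref{eq:subframe}.

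Second, the Gaussian analysis of \Cref{thm:main} cannot be reused with per-row lengths $R_j\norm{z}_2$. After normalization the features are $\cos(R_j\norm{z}_2\,w_j^Tg/\norm{g}_2)$. The effective lengths $R_j\norm{z}_2/\norm{g}_2$ are random and shared by all rows. The features are therefore not functions of the individual projections $w_j^Tg$, and \Cref{lmm:bl_gaussian} does not apply. (It would apply to unnormalized rows $\sqrt{S_j}\,v_j$ when $\kk$ is a Gaussian scale mixture, but the theorem also covers kernels that are positive definite only on $\R^d$.)

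What is needed is the spherical inequality \Cref{lmm:sphbl}, whose exponents are $c_j/2$. The Gaussian-type form you wrote, with no factor $2$, fails for $\Theta\sim\unisph$. For example, take $d=2$, $w_j=e_j$, $c_j=1$, and $f_1=f_2$ the indicator of a thin neighbourhood of $\{|\alpha|=2^{-1/2}\}$ with $\E f_1(\Theta_1)=p$. Then $\E[f_1(\Theta_1)f_2(\Theta_2)]$ is of order $p$, exceeding the right-hand side $p^2$.

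With the correct form, set $f_j(\alpha)=\exp\bigl(\frac{2s}{c^*m}(\cos(R_j\norm{z}_2\alpha)-\Omega_d(R_j\norm{z}_2))\bigr)$. Hoeffding's lemma gives $\E f_j(w_j^T\Theta)\le\exp(2s^2/(c^*m)^2)$, so the conditional MGF is at most $\exp(s^2/(c^*m))$. Choosing $s=tc^*m/2$ gives $2\exp(-c^*mt^2/4)\le 2\exp(-mt^2/(8\log(2d/\eta)))$ on the flattening event. This factor-$2$ loss relative to \Cref{thm:main} is exactly where your $8$, and hence the $32$, comes from. In your proposal that constant is chosen to match the statement rather than derived.
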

The radial measure of the kernel is the radial part of its spectral measure. We review the spectral measure in \Cref{sec:rff} and define the radial measure formally in \Cref{sec:radial}. 
We show in \Cref{sec:radial_eg} that for many standard kernels, one can sample from the radial measure in expected time $O(1)$. 

Lastly, we show that the running time in \Cref{thm:main} is nearly optimal. 

\begin{theorem}\label{thm:lb}
Let $d\ge 1$, $0<\varepsilon\le 1/8$, and
$0<\delta\le \varepsilon^4$. Suppose a randomized map
$F:\mathbb{R}^d\to\mathbb{R}^m$ satisfies, for every fixed $x,y$,
\begin{equation}\label{eq:epsdelta}
\Pr\!\left[
  \left| F(x)^TF(y)
  -\exp(-\|x-y\|_2^2/2)\right|>\varepsilon
\right]\le\delta.
\end{equation}
Then, evaluating $F$ must take time $ \Omega(d + \frac{\log(1/\delta))}{\varepsilon^2\log(1/\varepsilon)})$.
\end{theorem}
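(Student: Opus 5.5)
The bound has two terms, $\Omega(d)$ and $\Omega\bigl(\log(1/\delta)/(\varepsilon^2\log(1/\varepsilon))\bigr)$, and I would prove them separately. Both rest on the same convention: the time to evaluate $F$ at a point is at least the number of input coordinates it reads and at least the number $m$ of output coordinates it writes. So it is enough to show that $F$ must, with good probability, depend on $\Omega(d)$ input coordinates, and that $m$ must be $\Omega\bigl(\log(1/\delta)/(\varepsilon^2\log(1/\varepsilon))\bigr)$.

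\emph{The $\Omega(d)$ term.} I would use an adversary argument. Suppose that, for a fixed realization of the randomness, evaluating $F$ at a point reads only $q<d/2$ coordinates of the input. Evaluate $F(0)$. Let $S$ be the set of coordinates read, which is a random set of size at most $q$. Then some fixed coordinate $i$ lies outside $S$ with probability at least $1/2$. Take $y=L e_i$ with $L$ large, so that $\exp(-L^2/2)<1/8$. On the event $i\notin S$, a deterministic algorithm (run with the same randomness) follows the same computation path on $y$ as on $0$, because every coordinate it queries agrees. Hence $F(y)=F(0)$. Then
\[
F(0)^TF(y)=\norm{F(0)}_2^2=F(0)^TF(0),
\]
and applying the guarantee to the pair $(0,0)$ shows this is within $\varepsilon$ of $1$ except with probability $\delta$. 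But the true value at $(0,y)$ is below $1/8$, so the guarantee fails on the pair $(0,y)$ with probability at least $1/2-\delta>\delta$. This is a contradiction, so $q=\Omega(d)$.

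\emph{The output-dimension term.} I would reduce to a known communication or sketching lower bound. If $F$ satisfies \eqref{eq:epsdelta}, then Alice can send $F(x)$ rounded to $O(\log(1/\varepsilon))$ bits per coordinate, using shared public randomness, and Bob computes the inner product with $F(y)$. Two technical points arise here. First, the rounding needs $\norm{F(x)}_2$ bounded; the guarantee at the pair $(x,x)$ gives $\norm{F(x)}_2^2\le 1+\varepsilon$ with probability $1-\delta$, which handles this. Second, I would choose $F(x)$ and $F(y)$ as points where the kernel values encode an inner product of $\pm1$ vectors, using the fact that $\exp(-\norm{x-y}^2/2)$ is a monotone function of $x^Ty$ for unit vectors. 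This turns $F$ into a one-way protocol for estimating inner products of unit vectors to additive error $\Theta(\varepsilon)$ with failure probability $\delta$. The known lower bound for that problem is $\Omega(\varepsilon^{-2}\log(1/\delta))$ bits; this is the JL-type lower bound of Jayram--Woodruff and Kane--Meka--Nelson, which applies because $\delta\le\varepsilon^4$ keeps us in the regime where the bound holds. Dividing by the $O(\log(1/\varepsilon))$ bits per coordinate gives the required lower bound on $m$, and writing the output costs time $m$.

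The main obstacle is this second reduction. One has to produce a hard distribution of point pairs, realizable in $\R^d$ (or in a constant number of extra dimensions, if $d$ is small), on which the Gaussian kernel's additive $\varepsilon$-error separates the two cases of a Gap-Hamming or augmented-indexing instance. My first attempt would embed $\pm\frac{1}{\sqrt k}$ vectors with $k=\Theta(1/\varepsilon^2)$. The distances then concentrate near $\sqrt2$, where the derivative of $e^{-r^2/2}$ is a constant, so a gap of $\Theta(\varepsilon)$ in inner product becomes a kernel gap of $\Theta(\varepsilon)$. To recover the $\log(1/\delta)$ factor, I would repeat this over $\log(1/\delta)$ blocks in the distributional-complexity argument of Kane--Meka--Nelson.
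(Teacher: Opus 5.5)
Your $\Omega(d)$ adversary argument is sound; the paper omits this part as routine. The output-dimension half, however, has a genuine gap. A communication-complexity reduction cannot give $m=\Omega\bigl(\log(1/\delta)/(\varepsilon^2\log(1/\varepsilon))\bigr)$ in the regime where that term matters.

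\emph{You cannot add dimensions.} The theorem holds for every $d\ge1$, and $F$ is defined only on $\R^d$. So you cannot borrow ``extra dimensions'' to host Gap-Hamming or Jayram--Woodruff/Kane--Meka--Nelson hard instances, which live in dimension roughly $\varepsilon^{-2}\log(1/\delta)$.

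\emph{No such communication lower bound exists for small $d$.} Already for $d=1$ there is a cheap one-way protocol. Alice sends a quantization of $x$ to precision $\varepsilon$, modulo a window of width $O(\sqrt{\log(1/\varepsilon)})$. She adds an $O(\log(1/\delta))$-bit shared-randomness hash of the quantized point, so Bob can detect far-apart pairs, where the kernel is below $\varepsilon$. Bob then evaluates $\exp(-(x-y)^2/2)$ directly. This costs $O(\log(1/(\varepsilon\delta)))$ bits; in general dimension the same idea costs $O(d\log(d/\varepsilon)+\log(1/\delta))$ bits. So the bit size of $F(x)$ is not the binding constraint. Whenever $d$ is large enough for your reduction to possibly work, the claimed bound already follows, up to logarithmic factors, from your $\Omega(d)$ term.

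\emph{A smaller issue.} Rounding each coordinate of $F(x)$ to $O(\log(1/\varepsilon))$ bits does not control the inner-product error, which can grow like $\varepsilon\sqrt m$. You would need an $\varepsilon$-net of the ball in $\R^m$ to get $O(m\log(1/\varepsilon))$ bits.

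The missing idea is to exploit the \emph{inner-product} structure of the estimator, i.e.\ a rank constraint, rather than its bit complexity. The paper does this as follows.
\begin{itemize}
\item It places $n=\lceil 1/(2\sqrt\delta)\rceil$ points on a line, $x_i=i\sqrt{2\log(2/\varepsilon)}\,e_1$, so all off-diagonal kernel values are at most $\varepsilon/2$.
\item Since $n^2\delta<1$, a union bound gives a single realization of $F$ that is accurate on all pairs at once.
\item For that realization, the Gram matrix $\bigl(F(x_i)^TF(x_j)\bigr)_{i,j}$ has diagonal at least $1-\varepsilon\ge\frac12$, off-diagonal entries at most $3\varepsilon/2$ in absolute value, and rank at most $m$.
\item Alon's rank lower bound for perturbed identity matrices then gives $m=\Omega\bigl(\log n/(\varepsilon^2\log(1/\varepsilon))\bigr)=\Omega\bigl(\log(1/\delta)/(\varepsilon^2\log(1/\varepsilon))\bigr)$.
\end{itemize}
The hypotheses $\delta\le\varepsilon^4$ and $\varepsilon\le1/8$ are exactly what place $\alpha=3\varepsilon/2$ in Alon's admissible range $[1/(2\sqrt n),1/4)$. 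This argument works even for $d=1$, which is precisely where yours cannot.
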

If we set $t=\varepsilon$, $\eta=\frac12\delta$, $m=O(\frac{\log(1/\delta))\log(d/\delta)}{\varepsilon^2})$ in \Cref{thm:main}, \Cref{eq:mainthm_concentration} becomes \Cref{eq:epsdelta}. Thus, \Cref{thm:lb} shows that the computation time of $F$ in \Cref{thm:main} is optimal up to log factors.  

\subsection{Techniques}
We present a new analysis for the Fastfood transform \cite{le2013fastfood}. Fastfood is a sequence of two RHTs. The inner RHT acts as flattening on the input vector (i.e., spreading its $\ell_2$-mass approximately equally across its coordinates), as per a classical result of \cite{ailon2009fast} (see \Cref{lmm:fastjlflattening}). 
The challenge in analyzing the outer RHT is that, unlike original RFF, its output coordinates are not independent, thus requiring tail bound techniques for dependent random variables. Previous methods used concentration theorems for Lipschitz functions \cite{le2013fastfood} or Wiener chaoses \cite{wagner2026new} over the Gaussian diagonal entries of the outer RHT. Unfortunately, these techniques sustain a term that grows with the squared norm of the input vector, leading to the excess dependence on the diameter $\Delta$ as seen in  \Cref{tbl:main}. 

The main new step in our proof of \Cref{thm:main} is to control the outer RHT with the classic Brascanp-Lieb inequality \cite{brascamp1976best,lieb1990gaussian}. Roughly, it states that if our random variables arise as the projections of a Gaussian vector on $m$ ``sufficiently well-spread'' unit directions $\{w_i\}_{i=1}^m$, then although they are dependent, they concentrate like $c^*m$ truly independent random variables. $c^*\in(0,1)$ is a parameter that measures how well-spread the directions are (namely, $c^*\sum_iw_iw_i^T$ must be sub-isotropic; see \Cref{eq:subframe}). In our analysis, the output features are cast as the projections of the Gaussian diagonal of the outer RHT onto scaled Hadamard directions, and the flattening property of the inner RHT ensures that $c^*$ is sufficiently large, i.e., the directions are well-spread. 

\Cref{thm:radial} extends this approach beyond the Gaussian kernel by using Schoenberg's representation \cite{schoenberg1938metric,schoenberg1938metricb} of radial kernels as mixtures of spherical shell kernels (see \Cref{eq:schoenberg}). 
Then the above reasoning is applied to each shell kernel with a spherical variant of Brascamp-Lieb, where the projected vector is uniform over the unit sphere. 
The lower bound in \Cref{thm:lb} follows from standard arguments (e.g., \cite{chen2017relative}) as a consequence of Alon's rank bound for perturbed identity matrices \cite{alon2009perturbed} by designing an input dataset where the pairwise kernel values mimick a gram matrix of nearly orthogonal vectors.

\subsection{Related work}
Following \cite{rahimi2007random}, the methods summarized in \Cref{tbl:main} have been proposed as faster variants of random Fourier features. The Fastfood method, key to our work, was introduced in \cite{le2013fastfood} and analyzed through tail bounds for Lipschitz functions of Gaussians. \cite{backurs2024efficiently} proved new guarantees for the fast dimension reduction theorem of \cite{ailon2009fast} and used it as a pre-processing step for RFF. \cite{wagner2026new} proved a fast version of a spherical embedding theorem due to \cite{bartal2011dimensionality} and used it as pre-processing for Fastfood. The bounds arising from these methods are incomparable and depend on the parameter regime. All are based on RHTs, popularized in the algorithmic literature by \cite{sarlos2006improved,ailon2009fast}. 

The above methods yield a ``for-each'' guarantee that holds for each fixed pair with high probability. Fast RFF with a stronger ``for-all'' guarantee, which holds simultaneously for all pairs in a bounded domain, was studied in \cite{cherapanamjeri2022uniform}. 
In a different vein, \cite{yu2016orthogonal} proposed Structured Orthogonal Random Features (SORF), a fast variant of RFF that is also based on RHTs. Although highly effective empirically, it 
does not establish a kernel estimation tail bound comparable
to the methods in \Cref{tbl:main}. 

In addition to RFF, other approximate kernel feature methods include Quasi Monte-Carlo \cite{avron2016quasi,huang2024quasi}, quadrature \cite{bach2017equivalence,munkhoeva2018quadrature}, Nystr\"om \cite{rudi2015less,gittens2016revisiting}, and hashing-based methods \cite{andoni2009dimension,charikar2017hashing,siminelakis2019rehashing,backurs2019space,gorodissky2026locality}.

\section{Random Fourier features: technical background}\label{sec:rff}
We review the basics of RFF due to \cite{rahimi2007random}. 
Let $\kk:\R^d\times\R^d\rightarrow\R$ be a map that we refer to as a kernel over $\R^d$. 
The kernel is \emph{shift-invariant} if $\kk(x,y)$ depends only on $x-y$ for every $x,y\in\R^d$, that is, if $\kk(x,y)=\kk(x-y)$ with a slight abuse of notation.  
The kernel is \emph{positive definite} if $\sum_{i=1}^n\sum_{j=1}^nc_ic_j\kk(x_i,x_j)\geq0$ 
for every $n\in\N$, $c_1,\ldots,c_n\in\R$ 
and $x_1,\ldots,x_n\in\R^d$.  

By the classical Bochner theorem, a continuous shift-invariant kernel satisfying $\kk(0)=1$ is positive definite if and only if it is the Fourier transform of a probability measure $\mu_\kk$ over $\R^d$:
\begin{equation}\label{eq:bochner}
  \kk(x-y) = \E_{\omega\sim\mu_\kk}[e^{\iu\cdot\omega^T(x-y)}] ,
\end{equation}
where $\iu$ denotes the imaginary unit. $\mu_\kk$ 
is unique and called the \emph{spectral measure} of $\kk$. For example, for the Gaussian kernel $\kk(x,y)=\exp(-\norm{x-y}_2^2/2)$, the spectral measure is the standard multivariate Gaussian distribution, $N(0, I_d)$. 

The goal of random features is to find a map $F_\kk:\R^d\rightarrow\R^m$ such that $F_\kk(x)^TF_\kk(y)$ is a good approximation for $\kk(x,y)$ for each pair $x,y\in\R^d$. 
\cite{rahimi2007random} proposed to obtain this from \Cref{eq:bochner} via Monte-Carlo sampling. If one draws iid samples $\omega_1,\ldots,\omega_m$ from the spectral measure $\mu_\kk$, and defines $F_\kk:\R^d\rightarrow\R^{2m}$ by
\begin{equation}\label{eq:rff}
  F_\kk(x) = \frac1{\sqrt{m}}\oplus_{i=1}^m\left(\sin(\omega_i^Tx), \; \cos(\omega_i^Tx)\right) ,
\end{equation}
then from \Cref{eq:bochner} one can show that
\begin{equation}\label{eq:rffmean}
  \forall\;x,y\in\R^d\quad\kk(x,y) = \E_{\{\omega_i\}}[F_\kk(x)^TF_\kk(y)] .
\end{equation}
Furthermore, since the coordinates of $F$ are iid and bounded in $[-1,1]$, the standard Chernoff-Hoeffding tail bound yields strong concentration about the mean:
\begin{equation}\label{eq:rffconc}
  \forall\;x,y\in\R^d,\;\forall\;t>0,\quad \Pr[\left|F_\kk(x)^TF_\kk(y)-\kk(x,y)\right|\geq t]\leq2\exp(-mt^2/2) .
\end{equation}
Thus, once computed, random features can be used to speed kernel computations in a variety of downstream tasks. Unfortunately, computing $F_\kk(x)$ takes time $O(md)$, which could on its own be computationally heavy, particularly where the dimension $d$ is large. The goal of the present paper, and of the long line of work it belongs to, is to achieve properties similar to \Cref{eq:rffmean,eq:rffconc} for natural kernels with a faster computation time. 

\begin{remark}
    Another possible computational bottleneck is to sample $F_\kk$ to begin with (before applying it to any point). This takes time $O(m(1+T^{\mathrm{spec}}_\kk))$, where $T^{\mathrm{spec}}_\kk$ is the time to draw a single sample from $\mu_\kk$. 
    For many natural kernels, including the Gaussian kernel, $T^{\mathrm{spec}}_\kk=\Theta(d)$. 
\end{remark}

\begin{remark}
    Another qualitatively similar way to define RFF is to also draw uniform iid $\beta_1,\ldots,\beta_m\sim[0,2\pi]$ and define $\breve F_\kk(x) = \sqrt{2m^{-1}}\oplus_{i=1}^m\cos(\omega_i^Tx+\beta_i)$. Both forms of RFF appear widely in the literature. We focus on \Cref{eq:rff} for simplicity, since our analysis comes out cleaner without the additional source of randomness $\{\beta_i\}$.
\end{remark}

\section{Fast random features for the Gaussian kernel (\Cref{thm:main})}

We assume that $d$ is a power of 2 w.l.o.g.~by padding with $0$s.
We also focus on the case $m\leq d$, since that is the challenging regime. Once this regime is proven, the  extension to $m>d$ then follows as detailed in \Cref{app:bigm}.  

Our map is the simplified form of the Fastfood map from \cite{le2013fastfood}.  
We denote:
\[
  V := \sqrt{d}HGHB ,
\]
where $H\in\R^{d\times d}$ is the normalized Hadamard matrix (its entries are $\pm1/\sqrt{d}$), $B\in\R^{d\times d}$ is a diagonal matrix with iid random signs on the diagonal, and $G\in\R^{d\times d}$ is a random diagonal matrix with iid $N(0,1)$s on the diagonal. The map $F:\R^d\rightarrow\R^{2m}$ is defined as
\[
  F(x) = \frac{1}{\sqrt m}\oplus_{i=1}^m\left(\sin\left((Vx)_i\right),\cos\left((Vx)_i\right)\right) .
\]

\subsection{Preliminaries}
The next lemmas are either standard or shown in prior work.  
\begin{lemma}[Fast JL flattening \cite{ailon2009fast}]
    \label{lmm:fastjlflattening}
For every $z\in\R^d$ and $\eta>0$, 
    \[ \Pr_B\left[\norm{HBz}_\infty \leq \norm{z}_2\sqrt{\frac{2\log(2d/\eta)}{d}}\right] > 1-\eta . \]
\end{lemma}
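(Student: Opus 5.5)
The plan is to analyze each output coordinate of $HBz$ separately and then take a union bound over the $d$ coordinates. Fix $z\in\R^d$; by homogeneity we may assume $\norm{z}_2=1$.

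For a fixed index $i\in[d]$, the $i$-th coordinate is
\[
  (HBz)_i=\sum_{j=1}^d H_{ij}b_jz_j ,
\]
where $b_1,\ldots,b_d$ are the iid random signs on the diagonal of $B$. Since every entry of $H$ satisfies $|H_{ij}|=1/\sqrt d$, this is a Rademacher sum whose coefficients $a_j=H_{ij}z_j$ satisfy
\[
  \sum_j a_j^2=\tfrac1d\sum_j z_j^2=\tfrac1d .
\]

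Next we apply Hoeffding's inequality for Rademacher sums, $\Pr[|\sum_j a_jb_j|\ge s]\le 2\exp(-s^2/(2\sum_j a_j^2))$, which here gives
\[
  \Pr[|(HBz)_i|\ge s]\le 2\exp(-ds^2/2).
\]
We choose $s=\sqrt{2\log(2d/\eta)/d}$, so the right-hand side becomes $2\cdot\eta/(2d)=\eta/d$.

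A union bound over the $d$ coordinates then gives $\Pr[\norm{HBz}_\infty\ge s]\le\eta$. Equivalently, $\norm{HBz}_\infty<s$ holds with probability at least $1-\eta$, which is what the lemma asserts.

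The only point that needs care is the strict inequality in the lemma ($>1-\eta$). The union bound alone only yields $\ge 1-\eta$, but the event $\{\norm{HBz}_\infty < s\}$ is contained in the event in the lemma, and there are two ways to close the gap. First, Hoeffding's bound is strict whenever the sum is not degenerate. Second, if we assume $\eta<1$ (the only meaningful range, since for $\eta\ge1$ the claim is trivial, or vacuous when $\eta=1$), we can simply note that the individual coordinate tail events cannot all be disjoint with each having probability exactly $\eta/d$. In either case this is a minor technicality, and I expect no real obstacle: the argument is the standard Ailon–Chazelle flattening bound.
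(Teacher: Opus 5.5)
Your proof is correct and follows essentially the same route as the source: a per-coordinate Hoeffding bound for a Rademacher sum with variance $1/d$, followed by a union bound over the $d$ coordinates. This is the Ailon--Chazelle argument that the paper relies on by citing \cite{ailon2009fast} rather than proving the lemma itself. For the strict inequality, your first justification is the right one: since $\cosh x<e^{x^2/2}$ for $x\neq0$, each coordinate tail is strictly below $\eta/d$ when $z\neq0$. Your second justification (``cannot all be disjoint\ldots'') is unsupported as stated, but it is not needed.
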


\begin{lemma}\label{lmm:fastfood_uznorm}
For every $z\in\R^d$ and every supported $B$,  $\norm{HBz}_2=\norm{z}_2$. 
\end{lemma}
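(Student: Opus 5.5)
The statement to prove is \Cref{lmm:fastfood_uznorm}: $\norm{HBz}_2=\norm{z}_2$ for every $z\in\R^d$ and every supported $B$. The plan is to show that $HB$ is an orthogonal matrix, since orthogonal maps preserve Euclidean norm. I would handle the two factors separately and then compose them.

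First, the normalized Hadamard matrix $H$ is orthogonal. Recall that the unnormalized Sylvester--Hadamard matrix $\sqrt d\,H$ has entries $\pm1$ and mutually orthogonal rows, so $(\sqrt d H)(\sqrt d H)^T = dI_d$. This can be checked by induction on the recursive construction
\[
\begin{pmatrix}A & A\\ A & -A\end{pmatrix},
\]
which requires $d$ to be a power of $2$, as assumed w.l.o.g. Dividing by $d$ gives $HH^T=I_d$.

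Second, every supported $B$ is a diagonal matrix with entries in $\{\pm1\}$, so $BB^T=B^2=I_d$ and $B$ is orthogonal. Holding for every realization of the signs is exactly why the lemma says ``every supported $B$'' and not merely ``with high probability''.

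Composing the two,
\[
\norm{HBz}_2^2 = z^TB^TH^THBz = z^TB^TBz = z^Tz = \norm{z}_2^2 ,
\]
and taking square roots finishes the proof.

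There is no real obstacle here. The only point requiring care is the orthogonality of $H$, which is the standard property of Hadamard matrices of power-of-two order that the $\pm1/\sqrt d$ normalization is designed to produce.
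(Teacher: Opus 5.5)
Your proposal is correct and follows the same route as the paper, which also argues that $H$ is orthogonal and that $B$ only flips signs, so $\norm{HBz}_2=\norm{Bz}_2=\norm{z}_2$. You additionally justify the orthogonality of $H$ through the Sylvester recursion for power-of-two $d$, a standard fact the paper takes for granted.
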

\begin{proof}
    Since $H$ is orthogonal and $B$ only flips some signs of $z$,  $\norm{HBz}_2=\norm{Bz}_2=\norm{z}_2$.
\end{proof}

\begin{lemma}[\cite{le2013fastfood}]\label{lmm:fastfood_gaussian}
For every $z\in\R^d$, conditioned on $B$, each entry of $Vz$ is marginally distributed like $N(0,\norm{z}^2_2)$. 
\end{lemma}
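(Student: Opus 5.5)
Conditioned on $B$, set $u := HBz \in \R^d$. Then $Vz = \sqrt{d}\,H G u$, and since $G$ is diagonal, $Gu = \operatorname{diag}(u)\,g$, where $g\in\R^d$ is the vector of iid $N(0,1)$ diagonal entries of $G$. Hence the $i$-th entry is
\[
  (Vz)_i = \sqrt{d}\sum_{j=1}^d H_{ij}\,u_j\, g_j ,
\]
a fixed linear combination of the independent standard Gaussians $g_1,\ldots,g_d$, with coefficient vector $a^{(i)}$ given by $a^{(i)}_j := \sqrt{d}H_{ij}u_j$. Because such a combination is Gaussian with mean $0$ and variance $\norm{a^{(i)}}_2^2$, the claim reduces to computing this squared norm.

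Every entry of $H$ equals $\pm 1/\sqrt{d}$, so $d H_{ij}^2 = 1$ for all $i,j$. Therefore
\[
  \norm{a^{(i)}}_2^2 = \sum_j d H_{ij}^2 u_j^2 = \sum_j u_j^2 = \norm{u}_2^2 = \norm{HBz}_2^2 = \norm{z}_2^2 ,
\]
where the last equality is \Cref{lmm:fastfood_uznorm} and holds for every sign pattern $B$. Consequently, conditioned on any fixed $B$, $(Vz)_i \sim N(0,\norm{z}_2^2)$ for each $i$. This also covers the degenerate case $z=0$, where $(Vz)_i=0$ is the point mass $N(0,0)$.

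The argument has no real obstacle. The one point needing care is that the statement concerns only the marginal law of each single coordinate: the coordinates of $Vz$ are jointly Gaussian but correlated, and no independence across $i$ is claimed or needed here. The only other check is that conditioning on $B$ is legitimate, which holds because $G$ is independent of $B$, so $g$ keeps its iid $N(0,1)$ law after conditioning.
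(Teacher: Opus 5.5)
Your proposal is correct and matches the paper's proof: both fix $u = HBz$ given $B$, write $(Vz)_i$ as a linear combination of the iid Gaussians $g_j$ with coefficients $\sqrt{d}H_{ij}u_j$ (the paper phrases this as "each row of $\sqrt{d}HG$ is a vector of iid Gaussians" plus spherical invariance), and conclude via $\norm{u}_2 = \norm{z}_2$ from \Cref{lmm:fastfood_uznorm}. Your explicit variance computation is simply a more concrete rendering of the same step.
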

\begin{proof}
    Conditioned on $B$, the vector $u=HBz$ is fixed and all randomness in $V$ is encompassed in $G$. 
    Since $G$ has iid gaussians on its diagonal, and $\sqrt{d}H$ has all entries in $\pm1$, each row of $\sqrt{d}HG$ is a vector of iid gaussians (while the different rows are not independent). Therefore, by the spherical invariance of the normal distribution, each entry of $Vz=\sqrt{d}HGu$ is distributed like $N(0,\norm{u}_2^2)$, which per \Cref{lmm:fastfood_uznorm} is $N(0,\norm{z}_2^2)$.
\end{proof}

\begin{lemma}[standard fact]\label{lmm:cosnormal}
    $\E_{X\sim N(0,\nu^2)}[\cos(X)]=e^{-\nu^2/2}$. 
\end{lemma}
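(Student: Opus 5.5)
The plan is to prove the identity by computing the characteristic function of a Gaussian and taking its real part. Write $X=\nu Z$ with $Z\sim N(0,1)$. Then
\[
\E[\cos X]=\mathrm{Re}\,\E[e^{\iu X}],
\]
and it suffices to show that $\E[e^{\iu s Z}]=e^{-s^2/2}$ for every real $s$, then set $s=\nu$.

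To establish the characteristic function, I would use the following self-contained argument. Let $\phi(s)=\E[e^{\iu sZ}]=\E[\cos(sZ)]$; the sine part vanishes because $\sin$ is odd and $Z$ is symmetric. Differentiation under the integral is justified by dominated convergence, since $|Z e^{\iu sZ}|\le|Z|$ is integrable. This gives
\[
\phi'(s)=-\E[Z\sin(sZ)].
\]
Next, apply Gaussian integration by parts (Stein's identity $\E[Zg(Z)]=\E[g'(Z)]$ with $g(z)=\sin(sz)$). This gives $\phi'(s)=-s\,\E[\cos(sZ)]=-s\phi(s)$. Together with $\phi(0)=1$, the ODE has the unique solution $\phi(s)=e^{-s^2/2}$.

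An alternative route is the Taylor series. Expand $\cos(\nu Z)=\sum_k(-1)^k\nu^{2k}Z^{2k}/(2k)!$ and interchange sum and expectation, which is justified by Tonelli since $\E[e^{\nu|Z|}]<\infty$. Then use $\E[Z^{2k}]=(2k)!/(2^k k!)$ to get $\sum_k(-\nu^2/2)^k/k!=e^{-\nu^2/2}$.

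The only step requiring care is justifying the interchange of derivative (or sum) and expectation. Both are routine given the Gaussian tails. The case $\nu=0$ is trivial, and for $\nu<0$ symmetry gives the same result.
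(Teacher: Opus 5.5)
Your argument is correct. The paper gives no proof of this lemma and simply cites it as a standard fact: it is the real part of the Gaussian characteristic function, $\E[e^{\iu\nu Z}]=e^{-\nu^2/2}$ for $Z\sim N(0,1)$, which is the same fact behind the paper's remark that the Gaussian kernel's spectral measure is $N(0,I_d)$. Both of your routes derive that characteristic function completely, with the interchanges properly justified: Stein's identity with the ODE $\phi'(s)=-s\phi(s)$, or termwise integration of the cosine series using $\E[Z^{2k}]=(2k)!/(2^k k!)$. One small correction: in the series route the dominating function should be $e^{|\nu||Z|}$.
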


\begin{lemma}[Hoeffding's lemma]\label{lmm:hoeffding}
Let $X$ be a real-valued random variable such that $X\in[a,b]$ w.p.~1. Then for all $\lambda>0$, 
$\E[e^{\lambda (X-\E X)}] \leq e^{\lambda^2(b-a)^2/8}$.
\end{lemma}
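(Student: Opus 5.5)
The statement immediately above is Hoeffding's lemma, a standard fact. I would prove it by the classical convexity-plus-log-MGF argument.

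First reduce to the centered case. Let $Y=X-\E X$, which takes values in $[a',b']=[a-\E X,\,b-\E X]$, an interval of the same length $b-a$ that contains $0$ (so $a'\le 0\le b'$) and satisfies $\E Y=0$. It then suffices to show $\E[e^{\lambda Y}]\le e^{\lambda^2(b'-a')^2/8}$. If $a'=b'$ then $Y=0$ almost surely and the claim is trivial, so assume $a'<b'$.

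Next use convexity of $y\mapsto e^{\lambda y}$ on $[a',b']$. For every $y$ in the interval,
\[
e^{\lambda y}\le \frac{b'-y}{b'-a'}e^{\lambda a'}+\frac{y-a'}{b'-a'}e^{\lambda b'} .
\]
Taking expectations and using $\E Y=0$ gives
\[
\E e^{\lambda Y}\le \frac{b'}{b'-a'}e^{\lambda a'}-\frac{a'}{b'-a'}e^{\lambda b'} .
\]
Now set $p=-a'/(b'-a')\in[0,1]$ and $u=\lambda(b'-a')$. The right-hand side equals $e^{L(u)}$, where
\[
L(u)=-pu+\log(1-p+pe^{u}) .
\]

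Finally, bound $L$ by Taylor's theorem with Lagrange remainder. We have $L(0)=0$, and $L'(u)=-p+\frac{pe^u}{1-p+pe^u}$, so $L'(0)=0$. The second derivative is $L''(u)=q(1-q)$ with $q=\frac{pe^u}{1-p+pe^u}\in[0,1]$, hence $L''(u)\le 1/4$ for all $u$. Therefore $L(u)\le u^2/8=\lambda^2(b-a)^2/8$, which is exactly the claimed bound.

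The only step requiring care is this second-derivative computation, specifically recognizing $L''$ in the form $q(1-q)$. Everything else is routine.
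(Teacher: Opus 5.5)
Your proof is correct and is the standard textbook argument: a chord bound from the convexity of $y\mapsto e^{\lambda y}$, followed by a second-order Taylor bound on $L(u)=-pu+\log(1-p+pe^u)$ using $L''=q(1-q)\le 1/4$. The paper gives no proof of this lemma and simply invokes it as a standard fact, so your argument supplies the classical proof it relies on.
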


\begin{lemma}[rank-1 Gaussian Brascamp-Lieb; see \Cref{app:bl}]\label{lmm:bl_gaussian}
Let $w_1,\ldots,w_k\in\R^d$ be unit vectors, let $c_1,\ldots,c_k>0$, and suppose
\begin{equation}\label{eq:subframe}
    \sum_{i=1}^k c_iw_iw_i^T \preceq I_d .
\end{equation}
Then, for all non-negative measurable $f_i:\R\rightarrow[0,\infty)$,
\begin{equation}\label{eq:bl_conclusion}
\E_{g\sim N(0,I_d)}\left[\prod_{i=1}^kf_i(w_i^Tg)^{c_i}\right] \leq \prod_{i=1}^k\left(\E[f_i(N(0,1))]\right)^{c_i} .
\end{equation}
\end{lemma}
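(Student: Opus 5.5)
The plan is to reduce \Cref{lmm:bl_gaussian} to the classical \emph{geometric} Brascamp--Lieb inequality of Ball and Barthe, in which \Cref{eq:subframe} holds with equality. That inequality states that if $u_1,\ldots,u_n\in\R^d$ are unit vectors and $c_1,\ldots,c_n>0$ satisfy $\sum_j c_ju_ju_j^T=I_d$ (so that $\sum_j c_j=d$ by taking traces), then for all non-negative measurable $h_j$,
\[
\int_{\R^d}\prod_{j=1}^n h_j(u_j^Tx)^{c_j}\,dx\;\leq\;\prod_{j=1}^n\Big(\int_\R h_j\Big)^{c_j}.
\]
I will take this Lebesgue form as the black box, citing \cite{brascamp1976best,lieb1990gaussian} together with Ball and Barthe. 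The argument then has two steps: completing the sub-frame to a frame, and converting the Lebesgue form to the Gaussian form.

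\emph{Step 1 (completion).} Put $M:=I_d-\sum_{i=1}^kc_iw_iw_i^T$, which is positive semidefinite by \Cref{eq:subframe}. Take its spectral decomposition $M=\sum_{\ell}\lambda_\ell v_\ell v_\ell^T$ with orthonormal $v_\ell$, keeping only the eigenvalues $\lambda_\ell>0$. Add the pairs $(v_\ell,\lambda_\ell)$ to the family, and attach to each the function $f\equiv1$. The enlarged family satisfies $\sum c\,ww^T=I_d$ exactly. The product inside the expectation in \Cref{eq:bl_conclusion} is unchanged, and the new factors on the right-hand side are $1^{\lambda_\ell}=1$. So it suffices to prove \Cref{eq:bl_conclusion} in the equality case.

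\emph{Step 2 (Gaussian form from Lebesgue form).} Assume $\sum_ic_iw_iw_i^T=I_d$, and let $\gamma(t)=(2\pi)^{-1/2}e^{-t^2/2}$ be the standard normal density. Define $h_i(t):=f_i(t)\gamma(t)$. Then
\[
\prod_i h_i(w_i^Tx)^{c_i}=\prod_if_i(w_i^Tx)^{c_i}\cdot(2\pi)^{-\sum_ic_i/2}\exp\Big(-\tfrac12\textstyle\sum_ic_i(w_i^Tx)^2\Big).
\]
Since $\sum_ic_i=d$ and $\sum_ic_i(w_i^Tx)^2=x^T\big(\sum_ic_iw_iw_i^T\big)x=\norm{x}_2^2$, this equals $\prod_if_i(w_i^Tx)^{c_i}$ times the density of $N(0,I_d)$ at $x$. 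Integrating over $x$, the left side of the geometric Brascamp--Lieb inequality becomes exactly $\E_{g\sim N(0,I_d)}\big[\prod_if_i(w_i^Tg)^{c_i}\big]$. On the right side, $\int_\R h_i=\E[f_i(N(0,1))]$. This gives \Cref{eq:bl_conclusion}, with the conventions $0\cdot\infty=0$ and $\infty^{c}=\infty$ handling degenerate cases.

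\emph{Main obstacle.} The only nontrivial ingredient is the geometric Brascamp--Lieb inequality itself. If a self-contained proof is wanted rather than a citation, I would first try Barthe's transport argument. Let $T_i:\R\to\R$ be the monotone map pushing the normalized density $h_i/\!\int h_i$ to $\gamma$. Change variables via $x\mapsto\sum_ic_iT_i(w_i^Tx)w_i$, and then use the determinant inequality $\det\big(\sum_ic_i\alpha_iw_iw_i^T\big)\geq\prod_i\alpha_i^{c_i}$, which holds under the frame condition. As a fallback, the Ornstein--Uhlenbeck semigroup monotonicity proof of Carlen--Lieb--Loss gives the Gaussian form directly. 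In either case, Step 1 cleanly removes the sub-isotropic slack, which is the only feature that distinguishes \Cref{lmm:bl_gaussian} from the textbook statement.
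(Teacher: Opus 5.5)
Your proposal is correct and follows the paper's argument: the paper also completes the remainder $I_d-\sum_i c_iw_iw_i^T$ to an exact frame through its spectral decomposition, attaching $f\equiv 1$ to the added directions, and then applies the equality-case geometric Brascamp--Lieb inequality. The only difference is that the paper cites the Gaussian form of that inequality directly (from Barthe et al.), whereas you cite the Lebesgue form of Ball--Barthe and convert it via $h_i=f_i\gamma$, which is a valid and standard one-line reduction once the frame condition holds with equality.
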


\subsection{Proof of \Cref{thm:main}}\label{sec:mainproof}
Fix a pair $x,y\in\R^d$ and let $z:=x-y$.  
We start by proving \Cref{eq:mainthm_expectation}.

\begin{lemma}\label{lmm:ipcos}
$F(x)^TF(y) = \frac1m\sum_{i=1}^m\cos((Vz)_i)$.
\end{lemma}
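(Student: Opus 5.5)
The plan is to expand the inner product coordinate by coordinate and apply the angle-difference identity. By the definition of $F$,
\[
F(x)^TF(y)=\frac1m\sum_{i=1}^m\Bigl(\sin((Vx)_i)\sin((Vy)_i)+\cos((Vx)_i)\cos((Vy)_i)\Bigr).
\]
The $i$-th summand equals $\cos\bigl((Vx)_i-(Vy)_i\bigr)$, by the identity $\cos(a-b)=\cos a\cos b+\sin a\sin b$.

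Next we use linearity of $V$. The map $V=\sqrt d HGHB$ is a fixed linear map once its randomness ($B$ and $G$) is fixed. Hence $(Vx)_i-(Vy)_i=(V(x-y))_i=(Vz)_i$. Substituting this into each summand gives $\frac1m\sum_{i=1}^m\cos((Vz)_i)$, which is the claim.

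Since the identity holds pointwise for every realization of $B$ and $G$, no probabilistic reasoning is needed. The only care point is to apply linearity for a fixed draw of the randomness rather than in distribution. There is no real obstacle here.
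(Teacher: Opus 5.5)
Your proposal is correct and follows the same route as the paper: expand the inner product coordinatewise, apply $\cos(a-b)=\cos a\cos b+\sin a\sin b$, and use linearity of $V$ to write $(Vx)_i-(Vy)_i=(Vz)_i$. The paper uses linearity only implicitly, so your remark that it holds for each fixed draw of $B$ and $G$ makes that step explicit.
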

\begin{proof}
\begin{align*}
    F(x)^TF(y) &= \frac1m\sum_{i=1}^m\left(\sin((Vx)_i)\sin((Vy)_i) + \cos((Vx)_i)\cos((Vy)_i)\right) \\
    &= \frac1m\sum_{i=1}^m\left(\cos((Vx)_i - (Vy)_i)\right)  \\
    &= \frac1m\sum_{i=1}^m\cos((Vz)_i) ,
\end{align*}
by a standard trigonometric sum-product formula for the middle equality. 
\end{proof}

By \Cref{lmm:fastfood_gaussian}  $(Vz)_i\sim N(0,\norm{z}_2^2)$ for every $i$, thus \Cref{eq:mainthm_expectation} follows from \Cref{lmm:cosnormal}.

We proceed to prove \Cref{eq:mainthm_concentration}.  We first outline the argument and the use of Brascamp-Lieb (\Cref{lmm:bl_gaussian}). 
Suppose $X_1,\ldots,X_m$ are zero-centered random variables and $X=\frac1m\sum_iX_i$. Our goal is to prove a tail bound of the form $\Pr[X\geq t]\leq f(m,t)$. 
The standard Chernoff-Hoeffding proof blueprint goes as follows: for every $s>0$, we have $\Pr\left[X\geq t\right] 
  = \Pr\left[e^{sX}\geq e^{st}\right] 
  \leq e^{-st}\E\left[e^{sX}\right]$ by Markov's inequality. Thus the tail bound reduces to bounding the moment generating function (MGF) $\E[e^{sX}]$. In the simplest case, if $\{X_i\}$ are independent, we have
\begin{equation}\label{eq:chernoffiid}
  e^{-st}\E\left[e^{\frac{s}{m}\sum_iX_i}\right]
  = e^{-st}\E\left[\prod_ie^{\frac{s}{m}X_i}\right]
  = e^{-st}\prod_i\E\left[e^{\frac{s}{m}X_i}\right].
\end{equation}
Then, a tail bound can be obtained by bounding the MGFs of each individual $X_i$ (often through Hoeffding's lemma) and optimizing for the choice of $s$ that produces the tightest bound. 

In our case, $\{X_i\}$ would be associated with the output features of the structured random feature map $F$ in \Cref{thm:main}. Unfortunately, they are not independent, since every output feature depends on all the random diagonal entries of $B$ and $G$. Fortunately, the Brascamp-Lieb inequality \Cref{lmm:bl_gaussian} provides an alternative way to bound the MGF, which can replace the rightmost inequality in \Cref{eq:chernoffiid}. 
It requires that the $X_i$s are determined 
by the projection of a single standard Gaussian vector $g$ onto directions $w_1,\ldots,w_m$,
and that we can assign them weights $c_1,\ldots,c_m$ to render them sub-isotropic, meaning to satisfy \Cref{eq:subframe}. Intuitively, this condition ensures that no direction is overrepresented, and $\{c_i\}$ quantify how overrepresented any direction in $\R^d$ could be by the collection $\{w_i\}$. 
With a common weight $c_i=c^*$, Brascamp-Lieb allows us to bound the MGF of the $m$ terms as if they were approximately $c^*m$ independent random variables. 

This matches our setting because conditioned on $B$, every output feature of $F$ is 
determined by $w_i^Tg$ where $g$ is the diagonal vector of $G$ and the $w_i$s are fixed directions. 
Namely, for a fixed $u\in\R^d$, the outer Hadamard stage, $HG$, satisfies $HGu=H\mathrm{diag}(u)g$. Since the rows of $H$ are orthonormal, the sub-isotropy condition boils down to bounding the magnitude of entries in $u$. By Fast JL flattening (\Cref{lmm:fastjlflattening}), the inner Hadamard stage, $HB$, ensures that $u=HBz$ is an orthogonal image of $z$ that with high probability has entry magnitude nearly as small as possible. Taken together, the two Hadamard stages in $F$ allow us to utilize Brascamp-Lieb instead of the fully independent factorization in the rightmost equality of \Cref{eq:chernoffiid}.

We now turn to the formal proof. We may assume that $z\neq0$, since if $z=0$ then $F(x)^TF(y)=1$ by \Cref{lmm:ipcos}, rendering \Cref{eq:mainthm_concentration} trivial. We let
\[
  X_i = \cos((Vz)_i)-\exp(-\norm{z}_2^2/2) ,
\]
and observe that by \Cref{lmm:ipcos},
$F(x)^TF(y) - \exp(-\norm{z}_2^2/2) = \frac1m\sum_{i=1}^mX_i$.
Conditioned on $B$, as per the foregoing discussion, for every $s>0$ we have by Markov's inequality,
\[
\Pr\left[\frac{1}{m}\sum_iX_i\geq t \;|\; B\right] 
  = \Pr\left[e^{\frac{s}{m}\sum_iX_i}\geq e^{st} \;|\; B\right] 
  \leq
  e^{-st}\E\left[\prod_ie^{\frac{s}{m}X_i} \;|\; B\right] ,
\]
which, by writing out $\{X_i\}$, becomes
\begin{align*}
    &\Pr\left[F(x)^TF(y) - \exp(-\norm{z}_2^2/2)\geq t \;|\; B\right] \\
    &\leq
    \exp(-st) \cdot \E\left[\prod_{i=1}^m \exp\left(\frac{s}{m}\left(\cos((Vz)_i)- \exp(-\norm{z}_2^2/2))\right)\right)\;|\; B\right]. \numberthis\label{eq:hoeffding_mid}
\end{align*}

We now use Brascamp-Lieb (\Cref{lmm:bl_gaussian}) to bound the right-hand side. 
Recall that $u=HBz$. 
Let $g=(g_1,\ldots,g_d)\sim N(0,I_d)$ be the diagonal entries of $G$.   
Observe that the $i$th entry of $Vz$ can be written as
\[
  (Vz)_i = e_i^TVz = e_i^T(\sqrt{d}HGu) = e_i^T(\sqrt{d}H\mathrm{diag}(g)u) = e_i^T(\sqrt{d}H\mathrm{diag}(u)g) .
\]
Thus, if we let
\[
  w_i = \frac{\sqrt{d}}{\norm{z}_2}\mathrm{diag}(u)H^Te_i ,
\]
then
\begin{equation}\label{eq:w}
  (Vz)_i = \norm{z}_2\cdot w_i^Tg . 
\end{equation}
First, we verify that $w_i$ has unit length. Since $\sqrt{d}H^Te_i$ is a vector with all entries in $\{\pm1\}$, 
\[
  \norm{w_i}_2 = \frac{1}{\norm{z}_2}\norm{\mathrm{diag}(u)\cdot\sqrt{d}H^Te_i}_2 = \frac{\norm{u}_2}{\norm{z}_2} = 1,
\]
having used \Cref{lmm:fastfood_uznorm}. 
Next, we choose $(c_i)$ to satisfy the sub-isotropy condition \Cref{eq:subframe}. We have,
\[
  \sum_{i=1}^mw_iw_i^T \leq \sum_{i=1}^dw_iw_i^T = \frac{d}{\norm{z}_2^2}\sum_{i=1}^d\mathrm{diag}(u)H^Te_ie_i^TH\mathrm{diag}(u)
\]
\[
  = \frac{d}{\norm{z}_2^2}\mathrm{diag}(u)H^T\left(\sum_{i=1}^de_ie_i^T\right)H\mathrm{diag}(u) \\
  = \frac{d}{\norm{z}_2^2}\cdot\mathrm{diag}(u^2) \\
  \preceq \frac{d\norm{u}_\infty^2}{\norm{z}_2^2} \cdot I_d .
\]
Therefore, letting each $c_i$ be equal to
\begin{equation}\label{eq:cstar}
c^* = \frac{\norm{z}_2^2}{d\norm{u}_\infty^2} , 
\end{equation}
we have \Cref{eq:subframe}. 
Finally, we define 
\[
  f_i(\alpha) = \exp\left(\frac{s}{c^*\cdot m}\left( \cos(\norm{z}_2\cdot\alpha) -\exp\left(-\norm{z}_2^2/2\right)\right)\right) .
\]
Then by \Cref{eq:w},
\[
  f_i(w_i^Tg)^{c^*} = \exp\left(\frac{s}{m}\left( \cos((Vz)_i) -\exp\left(-\norm{z}_2^2/2\right)\right)\right) .
\]
This completes the setup for Brascamp-Lieb. \Cref{lmm:bl_gaussian} yields
\begin{equation}\label{eq:bl_gaussian_apply}
     \E\left[\prod_{i=1}^m \exp\left(\frac{s}{m}\left(\cos((Vz)_i)- \exp(-\norm{z}_2^2/2))\right)\right) \;|\; B \right]  
    \leq  \prod_{i=1}^m\left(\E[f_i(N(0,1))]\right)^{c^*}.
\end{equation}
To control each individual $\E[f_i(N(0,1))]$, 
let $Z$ be the random variable $Z=\cos(\norm{z}_2\cdot N(0,1))$. It is clearly bounded in $[-1,1]$ w.p.~1, and by \Cref{lmm:cosnormal}, $\E[Z]= \exp\left(-\norm{z}_2^2/2\right)$. Therefore, by Hoeffding's lemma (\Cref{lmm:hoeffding}),
\begin{align*}
   \E[f_i(N(0,1))] &=  \E\left[\exp\left(\frac{s}{c^*\cdot m}\left( \cos(\norm{z}_2\cdot N(0,1)) -\exp\left(-\norm{z}_2^2/2\right)\right)\right)\right] \\
   &= \E\left[\exp\left(\frac{s}{c^*\cdot m}\left(Z - \E[Z]\right)\right)\right] \leq \exp\left(\frac{s^2}{2(c^*)^2m^2}\right) .
\end{align*}
Plugging this in \Cref{eq:bl_gaussian_apply}
yields
\begin{equation}\label{eq:mgf_gaussian}
    \E\left[\prod_{i=1}^m \exp\left(\frac{s}{m}\left(\cos((Vz)_i)- \exp(-\norm{z}_2^2/2))\right)\right) \;|\; B\right]
    \leq  \exp\left(\frac{s^2}{2c^*m}\right) .
\end{equation}
Plugging into \Cref{eq:hoeffding_mid} and choosing $s=tc^*m$, we get
\[
  \Pr\left[F(x)^TF(y) - \exp(-\norm{z}_2^2/2)\geq t \;|\; B\right] 
  \leq  \exp\left(-t^2c^*m/2\right) .
\]
The proof of the lower tail is similar, thus
\begin{equation}\label{eq:gaussian_tail_condb}
  \Pr\left[|F(x)^TF(y) - \exp(-\norm{z}_2^2/2)|\geq t\;|\; B\right] 
  \leq  2\exp\left(-t^2c^*m/2\right) .
\end{equation}
Finally, we bound $c^*$ by un-conditioning on $B$. By the Fast JL flattening lemma (\Cref{lmm:fastjlflattening}), we have $\norm{u}_\infty^2\leq\norm{z}_2^2\cdot\frac{2\log(2d/\eta)}{d}$ w.p.~$1-\eta$ over the random choice of $B$. Thus, by \Cref{eq:cstar}, $c^*\geq1/(2\log(2d/\eta))$. Plugging this in \Cref{eq:gaussian_tail_condb} with a union bound over the failure of the flattening event (losing an additive $\eta$ to the overall failure probability on the right-hand side), we get
\[
  \Pr\left[|F(x)^TF(y) - \exp(-\norm{z}_2^2/2)|\geq t\right] 
  \leq  \eta + 2\exp\left(-t^2m/(4\log(2d/\eta))\right) ,
\]
which is \Cref{eq:mainthm_concentration}. This concludes the proof of \Cref{thm:main}.

\subsection{The $m>d$ regime}\label{app:bigm}
\Cref{sec:mainproof} proves \Cref{thm:main} in the case $m<d$. In this section we show that the extension for $m>d$ follows readily: Suppose $m>d$, and write $m=\sum_{\ell=1}^qm_j$ with $q=\lceil m/d\rceil$ and $1\leq m_j\leq d$ for all $j$. 
Sample the map $F^*:\R^d\rightarrow\R^{2m}$ as follows:
\begin{itemize}
    \item Sample a random diagonal matrix $B\in\R^{d\times d}$ with iid random signs on the diagonal.
    \item For every $j=1,\ldots,q$, sample a random diagonal matrix $G_j\in\R^{d\times d}$ with iid standard Gaussians on the diagonal. The $\{G_j\}$ are independent. 
    \item Denote $V_j = \sqrt{d}HG_jH_B$. 
    Define $F_j:\R^d\rightarrow\R^{2m_j}$ as the map from \Cref{sec:mainproof}:
    \[
      F_j(x) = \frac{1}{\sqrt{m_j}}\oplus_{i=1}^{m_j}\left(\sin((V_jx)_i),\;\cos((V_jx)_i)\right) .
    \]
\end{itemize}
$F^*$ is now defined as $F^* = \oplus_{i=j}^q\sqrt{\frac{m_j}m}F_j .$ Fix a pair $x,y\in\R^d$ and let $z=x-y$. 
For brevity, denote
\[
  \Delta_j = \frac{1}{m_j}\sum_{i=1}^{m_j}\left(\cos((V_jz)_i)- \exp(-\norm{z}_2^2/2))\right).
\]
Then by the MGF bound in \Cref{eq:mgf_gaussian},
\[
  \E[\exp(s\Delta_j) \;|\; B]
    \leq  \exp\left(\frac{s^2}{2c^*m_j}\right)
\]
Let  
\[
\Delta^*=\sum_{j=1}^q\frac{m_j}{m}\Delta_j . \] 
Similarly to \Cref{lmm:ipcos}, one can show that 
\[
  F^*(x)^TF^*(y) = \frac{1}{m}\sum_{j=1}^q\sum_{i=1}^{m_j}\cos((V_jz)_i) ,
\]
and therefore
\begin{equation}\label{eq:deltastar}
    \Delta^* = F^*(x)^TF^*(y) - \exp(-\norm{z}_2^2/2) .
\end{equation}
Observe that $\{V_j\}$ are independent conditioned on $B$, and also that $c^*$ is mutual to all $\{F_j\}$, since it depends only on $B$ and $z$. Therefore,  
\begin{align*}
  \E[\exp(s\Delta^*) \;|\; B]
    &= \E\left[\exp\left(s\sum_{j=1}^q\frac{m_j}{m}\Delta_j\right) \;|\; B\right] \\
    &= \prod_{j=1}^q\E\left[\exp\left(\frac{sm_j}{m}\Delta_j\right) \;|\; B\right] \\
    &\leq \prod_{j=1}^q\exp\left(\frac{(sm_j/m)^2}{2c^*m_j}\right) \\
    &= \exp\left(\frac{s^2}{2c^*m^2}\sum_{j=1}^qm_j\right) \\
    &= \exp\left(\frac{s^2}{2c^*m}\right) .
\end{align*}
Plugging this MGF bound in the Hoeffding-Chernoff blueprint and using \Cref{eq:deltastar}, we get that for every $s,t>0$, 
\[
  \Pr\left[  F^*(x)^TF^*(y) - \exp(-\norm{z}_2^2/2) > t \;|\; B\right] \leq \exp\left(-st + \frac{s^2}{2c^*m}\right) .
\]
From this point, the proof is finished in the same way as \Cref{sec:mainproof}: we choose $s=mc^*t$ and de-condition on $B$ with the JL flattening event from \Cref{lmm:fastjlflattening}. This proves the upper tail bound. The proof of the lower tail is similar and we omit it.

\section{Fast random features for other radial kernels (\Cref{thm:radial})}\label{sec:radial}

We first define the radial measure of a kernel. 

\begin{definition}[radial measure of a kernel]
    Let $\kk:\R^d\times\R^d\rightarrow[-1,1]$ be a shift-invariant positive definite kernel, and let $\mu_\kk$ be its spectral measure over $\R^d$.  The \emph{radial measure} $\nu_\kk$ of $\kk$ is defined as the probability distribution of $\norm{\omega}_2$ where $\omega\sim\mu_\kk$.   
\end{definition}

Thus, the radial measure can be realized by drawing a sample from the spectral measure and taking its Euclidean norm; however, this may not be the most efficient way to sample from the radial measure, as we discuss in \Cref{sec:radial_eg}. If the kernel is radial, then the angular part of its spectral measure is uniform on the unit sphere, and thus the radial measure fully ``encodes'' the kernel.

The random map $F_\kk$ in \Cref{thm:radial} is a close variant of the map from \Cref{sec:mainproof}. 
Let $R_1,\ldots,R_m\sim\nu_k$ be iid samples from the radial measure of $\kk$. 
Using the notation from \Cref{sec:mainproof}, 
we denote 
$  \bar{V} = \norm{G}_F^{-1} V = \norm{G}_F^{-1}\sqrt{d}HGHB$.
The map $F_\kk$ is defined as
\[
  F_\kk(x) = \frac{1}{\sqrt m}\oplus_{i=1}^m\left(\sin\left(R_i(\bar{V}x)_i\right),\cos\left(R_i(\bar{V}x)_i\right)\right) .
\]
Note that $B$ and $G$ take time $O(d)$ to sample, and $R_1,\ldots,R_m$ take time $O(m(1+T_{\nu}))$ to sample, and therefore the total time to sample $F_\kk$ is $O(d+m(1+T_{\nu}))$.

\subsection{Preliminaries}
Throughout this section, we let $\unisph$ denote the uniform distribution over the unit sphere $\Sph^{d-1}$.
The following is the classical Schoenberg representation for positive definite radial kernels \cite{schoenberg1938metric,schoenberg1938metricb}, which expresses them as mixtures of spherical shell kernels of different radii.

\begin{lemma}[Schoenberg representation]\label{lmm:schoenberg}
    For every fixed dimension $d$, there exists a map $\Omega_d:\R_{\geq0}\rightarrow[-1,1]$ (called the spherical shell kernel) that satisfies
    \[
         \forall\;x,y\in\R^d, \quad \Omega_d(\norm{x-y}_2)=\E_{\Theta\sim\unisph}[\cos\left(\Theta^T(x-y)\right)] .
    \]
    If $\kk(x,y)=\kappa(\norm{x-y}_2)$ is a continuous positive definite normalized radial kernel, then
    \begin{equation}\label{eq:schoenberg}
         \forall\;x,y\in\R^d, \quad  \quad\kk(x,y) = \E_{R\sim\nu_k}[\Omega_d(R\cdot\norm{x-y}_2)] ,
    \end{equation}
    where $\nu_\kk$ is the radial measure of $\kk$. Consequently, 
    \[
      \forall\;x,y\in\R^d, \quad \kk(x,y) = \E_{R\sim\nu_k,\;\Theta\sim\unisph}[\cos(R\cdot\Theta^T(x-y))] .
    \]
\end{lemma}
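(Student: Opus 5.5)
The plan is to derive all three parts of \Cref{lmm:schoenberg} from Bochner's theorem (\Cref{eq:bochner}) together with rotation invariance. The steps are: define $\Omega_d$, show that the spectral measure of a radial kernel is rotation invariant, and then disintegrate that measure into its radius and direction.

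\textbf{Defining $\Omega_d$.} For $v\in\R^d$ set $\phi(v)=\E_{\Theta\sim\unisph}[\cos(\Theta^Tv)]$. Since $\unisph$ is invariant under every orthogonal map $Q$, we have $\Theta^T(Qv)=(Q^T\Theta)^Tv$ and $Q^T\Theta\sim\unisph$, so $\phi(Qv)=\phi(v)$. Any two vectors of equal Euclidean norm are related by an orthogonal map, hence $\phi(v)$ depends only on $\norm{v}_2$. We define $\Omega_d(r)=\phi(re_1)$. It takes values in $[-1,1]$ because $\cos$ does, and it satisfies the first displayed identity.

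\textbf{Rotation invariance of $\mu_\kk$.} Since $\kk$ is continuous, positive definite and normalized, Bochner's theorem (\Cref{eq:bochner}) applies, giving $\kappa(\norm{v}_2)=\E_{\omega\sim\mu_\kk}[e^{\iu\omega^Tv}]$. Fix an orthogonal $Q$ and let $\omega'=Q\omega$. Its Fourier transform is $\E[e^{\iu\omega^TQ^Tv}]=\kappa(\norm{Q^Tv}_2)=\kappa(\norm{v}_2)$. By the uniqueness of the spectral measure, $Q\omega$ has law $\mu_\kk$ for every orthogonal $Q$.

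\textbf{Disintegration.} This is the step I expect to need the most care. I will write $\omega=R\Theta$ with $R=\norm{\omega}_2$. To avoid defining $\Theta$ when $\omega=0$, draw an independent $Q$ from Haar measure on $O(d)$. By the previous step, $\omega$ and $Q\omega$ have the same law. Conditioned on $\omega$ with $R>0$, the vector $Q\omega/R$ is uniform on $\Sph^{d-1}$ regardless of $\omega$. Therefore $Q\omega=R\Theta$ with $R\sim\nu_\kk$ and $\Theta\sim\unisph$ independent of $R$; when $R=0$ we may take $\Theta$ uniform arbitrarily, since then $R\Theta=0$. Replacing $\omega$ by $R\Theta$ in the Bochner formula gives
\[
\kk(x,y)=\E_{R,\Theta}\left[e^{\iu R\Theta^T(x-y)}\right].
\]
The left-hand side is real, and the distribution of $\Theta$ is symmetric under $\Theta\mapsto-\Theta$, so the sine part vanishes. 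This leaves
\[
\kk(x,y)=\E_{R,\Theta}\left[\cos\left(R\Theta^T(x-y)\right)\right],
\]
which is the final claim of the lemma.

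\textbf{Conclusion.} Condition on $R$ and apply the definition of $\Omega_d$ with $v=R(x-y)$, using the independence of $R$ and $\Theta$ (Fubini). This gives $\E_\Theta[\cos(R\Theta^T(x-y))]=\Omega_d(R\norm{x-y}_2)$, and taking the expectation over $R\sim\nu_\kk$ yields \Cref{eq:schoenberg}.
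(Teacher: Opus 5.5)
Your proof is correct, but it takes a different route from the paper. The paper does not prove the lemma from first principles. It imports Schoenberg's theorem in the form of Theorem~1.2 of \cite{golinskii2018radial}. That theorem supplies both the mixture identity $\kappa(r)=\E_R[\Omega_d(rR)]$ and the formula $\Omega_d(\norm{z}_2)=\E_\Theta[e^{\iu\Theta^Tz}]$. The paper's only own steps are replacing $e^{\iu\cdot}$ by $\cos$ (because $\sin$ is odd) and substituting $z=x-y$. So the paper goes ``mixture of shell kernels first, cosine form second.''

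You go the other way. You derive the joint representation $\kk(x,y)=\E_{R,\Theta}[\cos(R\Theta^T(x-y))]$ directly from \Cref{eq:bochner}, using rotation invariance of $\mu_\kk$ (via uniqueness of the Fourier transform) and the Haar-randomization step that makes $\omega$ and $Q\omega=R\Theta$ equal in law. You then get \Cref{eq:schoenberg} by Fubini.

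The paper's route is shorter but leans on the citation. Schoenberg's theorem is usually stated as the existence of \emph{some} mixing probability measure on $[0,\infty)$. The paper tacitly identifies that measure with the radial measure $\nu_\kk$ as defined, namely the law of $\norm{\omega}_2$ for $\omega\sim\mu_\kk$. This identification is true, for instance by uniqueness of the spectral measure, but the paper does not argue it.

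Your argument is self-contained given Bochner's theorem, which the paper already states. It also proves the identity with exactly $\nu_\kk$. That is what the construction needs, since $F_\kk$ draws $R_i\sim\nu_\kk$ and the samplers in \Cref{sec:radial_eg} are built as norms of spectral samples. Finally, randomizing with an independent Haar $Q$ avoids conditioning on $\norm{\omega}_2$ and handles the possible atom at $\omega=0$ without fuss.
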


\Cref{lmm:schoenberg} is classical and appears in various phrasings in the literature. 
We briefly point out how to obtain it from Theorem 1.2 in \cite{golinskii2018radial}. Their equation (1.2) in our notation is $\kappa(r) = \E_R[\Omega_d(rR)]$ for all $r\geq0$,
and by their equation (1.4),
$\Omega_d(\norm{z}_2) = \E_\Theta[e^{\iu\cdot\Theta^Tz}]$ for all $z\in\R^d$. 
Since $e^{\iu\zeta}=\cos(\zeta)+\iu\sin(\zeta)$ and $\sin$ is symmetric about zero, the latter is equivalent to $\Omega_d(\norm{z}_2) = \E_\Theta[\cos(\Theta^Tz)]$. 
Hence also $\Omega_d(R\cdot\norm{z}_2) = \E_\Theta[\cos(R\cdot\Theta^Tz)]$ for all $R\geq0$. Plugging $z=x-y$ and $r=\norm{z}_2$ yields the lemma. 

Next is a spherical variant of the Brascamp-Lieb inequality from \Cref{lmm:bl_gaussian}.

\begin{lemma}[rank-1 spherical Brascamp-Lieb; see \Cref{app:bl}]\label{lmm:sphbl}
Under the same conditions as \Cref{lmm:bl_gaussian}, for all non-negative measurable $f_i:\R\rightarrow[0,\infty)$,
\[
\E_{\Theta\sim\unisph}\left[\prod_{i=1}^kf_i(w_i^T\Theta)^{c_i/2}\right] \leq \prod_{i=1}^k\left(\E_{\Theta\sim\unisph}[f_i(w_i^T\Theta)]\right)^{c_i/2} .
\]
\end{lemma}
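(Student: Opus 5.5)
The plan is to deduce the spherical inequality from the Gaussian one, \Cref{lmm:bl_gaussian}, using the polar decomposition of a standard Gaussian vector. Write $g=\norm{g}_2\Theta$. Here $\Theta\sim\unisph$ is independent of the radius $\rho=\norm{g}_2$, and $\rho^2\sim\chi^2_d$.

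\textbf{Step 1: reduce to test functions built from $\Theta$.} Given non-negative $f_i$, I will look for functions $h_i$ on $\R$ and apply \Cref{lmm:bl_gaussian} to the functions $h_i(w_i^Tg)$. The difficulty is that $w_i^Tg=\rho\, w_i^T\Theta$ mixes the radius and the angle. The exponent $c_i/2$ in the statement suggests the standard ``lifting'' trick: pass to dimension $2d$, or more generally use extra Gaussian coordinates, so that the radial part can be averaged out.

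Concretely, I will view the sphere $\Sph^{d-1}$ through a Gaussian $g\in\R^d$. I will compare $\E[\prod_i f_i(w_i^T\Theta)^{c_i/2}]$ with a Gaussian Brascamp--Lieb functional whose datum is the family $\{(w_i,c_i)\}$ together with an additional rank-one family that controls $\rho$. Since $\sum_i c_i w_iw_i^T\preceq I_d$, we have $\sum_i c_i\le d$. This slack is what lets us spend half of each exponent on controlling the radius.

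\textbf{Step 2: handle the radial factor.} Write $\prod_i f_i(w_i^T\Theta)^{c_i/2}=\prod_i\bigl(f_i(w_i^Tg/\rho)\bigr)^{c_i/2}$. The approach I would try first is Cauchy--Schwarz/H\"older to split $\E_\Theta[\cdot]$ into pieces. Each piece is the Gaussian expectation of a product of functions of the $w_i^Tg$ alone, obtained by conditioning on a shell $\rho\in[r,2r]$ or by integrating against a radial weight. The exponent halving $c_i\to c_i/2$ is exactly what such a Cauchy--Schwarz costs.

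The cleaner alternative, which I would attempt in parallel, is to invoke the known geometric spherical Brascamp--Lieb inequality (Carlen--Lieb--Loss, Barthe--Cordero-Erausquin--Ledoux--Maurey). It states that for $\sum_i c_iw_iw_i^T=I_d$,
\[
\E_\Theta\Bigl[\prod_i F_i(w_i^T\Theta)^{c_i/2}\Bigr]\le\prod_i\bigl(\E_\Theta F_i(w_i^T\Theta)\bigr)^{c_i/2}.
\]

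\textbf{Step 3: from subframe to frame.} This version requires equality $\sum_i c_iw_iw_i^T=I_d$. If only $\preceq$ holds, I will complete the system. Decompose $I_d-\sum_i c_iw_iw_i^T=\sum_j \lambda_j v_jv_j^T$ spectrally, with $\lambda_j\ge0$ and $v_j$ unit vectors. Then add the extra terms $(v_j,\lambda_j)$ with functions $f\equiv1$. These contribute factors equal to $1$ on both sides, so the inequality for the completed frame implies the stated one.

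The main obstacle is Step 2: establishing the spherical inequality itself with exponent $c_i/2$. This is the genuinely nontrivial part, and I would rely on the cited literature, proved via heat-flow or the Gaussian inequality on $\R^{d}$ lifted to the sphere, rather than reprove it. The frame completion in Step 3 is routine.
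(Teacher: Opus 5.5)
Your committed route is exactly what the paper does: invoke the geometric Brascamp--Lieb inequality on the sphere (Carlen--Lieb--Loss; Barthe--Cordero-Erausquin--Ledoux--Maurey) in its rank-one frame case, then pass from $\sum_i c_iw_iw_i^T\preceq I_d$ to equality by adding the spectral directions of the remainder with $f\equiv 1$. The paper cites the rank-one case of the BCELM spherical inequality and relies on the same padding trick, which it writes out for the Gaussian lemma. The Gaussian-lifting and shell Cauchy--Schwarz idea in Steps 1--2 is unnecessary, and as sketched it would lose multiplicative constants rather than give the sharp constant $1$, so you can drop it.
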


\subsection{Proof of \Cref{thm:radial}}
Like in the proof of \Cref{thm:main}, we assume that $d\leq m$. The extension to $m>d$ then follows by the same arguments as those in \Cref{app:bigm}.

Fix a pair $x,y\in\R^d$ and $z=x-y$. 
Let $g=(g_1,\ldots,g_d)\sim N(0,I_d)$ be the diagonal of $G$. 
Let $ \bar{g}:=\norm{g}_2^{-1}\cdot g .$ By the spherical invariance of the normal distribution, $\bar{g}$ is distributed like $\unisph$. We start by proving \Cref{eq:radial_expectation}. The following lemma follows from the same proof as in \Cref{lmm:ipcos}.
\begin{lemma}\label{lmm:ipcos_radial}
$F_\kk(x)^TF_\kk(y) = \frac1m\sum_{i=1}^m\cos(R_i( \bar{V}z)_i)$.
\end{lemma}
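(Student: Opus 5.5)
The plan is to repeat the computation of \Cref{lmm:ipcos} verbatim, with $V$ replaced by $\bar V$ and an extra scalar $R_i$ attached to the $i$th coordinate. By definition of $F_\kk$,
\[
F_\kk(x)^TF_\kk(y)=\frac1m\sum_{i=1}^m\Bigl(\sin\bigl(R_i(\bar Vx)_i\bigr)\sin\bigl(R_i(\bar Vy)_i\bigr)+\cos\bigl(R_i(\bar Vx)_i\bigr)\cos\bigl(R_i(\bar Vy)_i\bigr)\Bigr).
\]
For each $i$ we then apply the identity $\sin a\sin b+\cos a\cos b=\cos(a-b)$ with $a=R_i(\bar Vx)_i$ and $b=R_i(\bar Vy)_i$. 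This rewrites the $i$th summand as $\cos\bigl(R_i((\bar Vx)_i-(\bar Vy)_i)\bigr)$.

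It remains to identify the argument of the cosine. The map $x\mapsto\bar Vx$ is linear, because $\bar V=\norm{G}_F^{-1}\sqrt d\,HGHB$ is a fixed matrix once $B$ and $G$ are sampled; the normalization $\norm{G}_F^{-1}$ is a scalar that does not depend on $x$. Hence $(\bar Vx)_i-(\bar Vy)_i=(\bar V(x-y))_i=(\bar Vz)_i$. The scalar $R_i$ does not depend on $x$ either, so it factors out of the difference, giving $R_i(\bar Vz)_i$. Summing over $i$ and keeping the prefactor $\frac1m$, which comes from the two $\frac1{\sqrt m}$ normalizations, yields the claim.

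There is no real obstacle here. The only point to check is that each random quantity in the feature map ($R_i$, $B$, $G$, and $\norm{G}_F$) is shared between the evaluations at $x$ and at $y$. This holds because $F_\kk$ is sampled once and then applied to both points, so the argument is purely deterministic given the sample. The identity holds pointwise for every realization (almost surely $\norm{G}_F\neq0$, so $\bar V$ is well defined).
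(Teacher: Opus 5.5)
Your proof is correct and is the paper's argument: the paper proves this lemma by repeating the proof of \Cref{lmm:ipcos}, namely applying the identity $\sin a\sin b+\cos a\cos b=\cos(a-b)$ to each coordinate and then using linearity of $x\mapsto R_i(\bar Vx)_i$. Your remark that $\norm{G}_F\neq 0$ almost surely, so that $\bar V$ is well defined, is a harmless extra detail that the paper leaves implicit.
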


The following lemma is analogous to \Cref{lmm:fastfood_gaussian}.
\begin{lemma}\label{lmm:sphere_marginal}
Each row of $\bar{V}$ is marginally distributed like $\unisph$. 
\end{lemma}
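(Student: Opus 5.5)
Fix a row index $i$ and condition on $B$. The plan is to show that the $i$th row of $\bar V$ is the normalization of a standard Gaussian vector, and then use the spherical invariance of the normal distribution, as in the proof of \Cref{lmm:fastfood_gaussian}. Since the argument works for every fixed $B$, the claim then holds unconditionally as well.

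\textbf{Step 1: rewrite the row.} Let $s:=\sqrt{d}H^Te_i\in\{\pm1\}^d$, where we use that $H$ is symmetric. The $i$th row of $\sqrt{d}HG$ is $e_i^T\sqrt{d}HG = s^T\mathrm{diag}(g) = (s\circ g)^T$, where $\circ$ denotes the entrywise product. Hence the $i$th row of $V$, written as a column vector, is
\[
  (HB)^T(s\circ g).
\]
Since $G$ is diagonal, $\norm{G}_F=\norm{g}_2$. Moreover $\norm{g}_2=\norm{s\circ g}_2$, because $s$ only flips signs. The $i$th row of $\bar V$ (as a column) is therefore
\[
  \frac{(HB)^T(s\circ g)}{\norm{s\circ g}_2}.
\]

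\textbf{Step 2: identify the distribution.} The entries of $g$ are iid symmetric and $s$ is a fixed sign vector, so $h:=s\circ g\sim N(0,I_d)$. The matrix $Q:=(HB)^T$ is orthogonal, being a product of a signed diagonal matrix and an orthogonal matrix. By \Cref{lmm:fastfood_uznorm}, or directly from orthogonality, $\norm{Qh}_2=\norm{h}_2$. So the row equals $Qh/\norm{Qh}_2$.

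By rotation invariance of the standard Gaussian, $Qh\sim N(0,I_d)$. A normalized standard Gaussian vector is distributed like $\unisph$, the same fact already used to conclude $\bar g\sim\unisph$. This holds for every fixed $B$, so the row is distributed like $\unisph$ both conditionally on $B$ and marginally.

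\textbf{Where care is needed.} The only delicate point is the normalization. The factor $\norm{G}_F^{-1}$ must coincide exactly with the norm of the Gaussian vector being rotated, and it does, via $\norm{G}_F=\norm{g}_2=\norm{s\circ g}_2=\norm{Qh}_2$. Apart from this there is no real obstacle. The statement is only about each row's marginal distribution; different rows are dependent, and that dependence is handled later through \Cref{lmm:sphbl}.
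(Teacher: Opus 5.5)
Your proof is correct and is essentially the paper's argument. The paper also writes the $i$th row of $\bar V$ as $\bar g^T\mathrm{diag}(h_i)HB$ with $h_i\in\{\pm1\}^d$, and concludes from $\bar g\sim\unisph$ together with the orthogonality of $\mathrm{diag}(h_i)HB$. The only cosmetic difference is the order of steps: you absorb the sign pattern into the Gaussian and use rotation invariance of $N(0,I_d)$ before normalizing, whereas the paper normalizes first and uses rotation invariance of $\unisph$ directly.
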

\begin{proof} 
Fixing a row index $i$, let $h_i\in\{\pm1\}^d$ be the $i$th column of $\sqrt{d}H$. Then the $i$th row of $\bar{V}$ can be written as
$e_i^T\bar{V} 
  = h_i^T\norm{G}_F^{-1}GHB = \bar{g}^T\mathrm{diag}(h_i)HB$.
Since $H$ is an orthogonal matrix then so is $\mathrm{diag}(h_i)HB$.
Therefore, since $\bar{g}$ is distributed like $\unisph$, so is $\bar{g}^T\mathrm{diag}(h_i)HB$.  
\end{proof}

By \Cref{lmm:ipcos_radial}, $F_\kk(x)^TF_\kk(y)$ is the average of random variables of the form $\cos(R\cdot\Theta^Tz)$, where $R$ is sampled from $\nu_\kk$ and $\Theta$ from $\unisph$. By \Cref{lmm:schoenberg}, the mean of each such random variable is $\kk(x,y)$, and \Cref{eq:radial_expectation} follows. 

We proceed to prove the concentration bound in \Cref{eq:radoial_concentration}, whose proof at this point is rather similar to that of the Gaussian kernel case in \Cref{sec:mainproof}. 
Assume $z\neq 0$ since otherwise \Cref{eq:radoial_concentration} holds trivially.
Let $s,t>0$. Conditioning on $B$ and $\{R_i\}$ (so that the only remaining source of randomness in $F_\kk$ is $G$), we have by Markov's inequality followed by \Cref{lmm:ipcos_radial} (analogously to \Cref{eq:hoeffding_mid}),
\begin{align*}
    &\Pr\left[F_\kk(x)^TF_\kk(y) - \frac1m\sum_{i=1}^m\Omega_d(R_i\norm{z}_2)\geq t \;|\; B,\{R_i\}\right] \\
    &= \Pr\left[\exp\left(s\left(F_\kk(x)^TF_\kk(y) - \frac1m\sum_{i=1}^m\Omega_d(R_i\norm{z}_2)\right)\right)\geq\exp(st)\;|\; B,\{R_i\}\right] \\
    &\leq \exp(-st)\cdot\E\left[\exp\left(s\left(F_\kk(x)^TF_\kk(y) - \frac1m\sum_{i=1}^m\Omega_d(R_i\norm{z}_2)\right)\right)\;|\; B,\{R_i\}\right] \\
    &= \exp(-st)\cdot\E\left[\prod_{i=1}^m\exp\left(\frac{s}{m}\left(\cos\left(R_i(\bar{V}z)_i\right) - \Omega_d\left(R_i\norm{z}_2\right)\right)\right) \;|\; B,\{R_i\}\right]. \numberthis\label{eq:hoeffding_mid_radial}
\end{align*}

We now apply the spherical Brascamp-Lieb inequality (\Cref{lmm:sphbl}). 
We use the same $w_i$, $c_i$ and $c^*$ defined in \Cref{sec:mainproof}. There we showed they satisfy the conditions of \Cref{lmm:bl_gaussian}, which are the same as the conditions of \Cref{lmm:sphbl}. We define the maps $f_i$ as
\[
  f_i(\alpha) := \exp\left(\frac{2}{c^*}\cdot\frac{s}{m}\left(\cos\left(R_i\norm{z}_2\cdot\alpha\right) - \Omega_d\left(R_i\norm{z}_2\right)\right)\right) .
\]
Recalling from \Cref{eq:w} that $(Vz)_i=\norm{z}_2w_i^Tg$, and that $\bar{V}=\norm{G}_F^{-1}V=\norm{g}_2^{-1}V$ and $\bar{g}=\norm{g}_2^{-1}g$, we have $w_i^T\bar{g}=\norm{z}_2^{-1}(\bar{V}z)_i$.
Consequently (recalling that $c_i=c^*$ for all $i$),
\[
  f_i(w_i^T\bar{g})^{c_i/2} = \exp\left(\frac{s}{m}\left(\cos\left(R_i(\bar{V}z)_i\right) - \Omega_d\left(R_i\norm{z}_2\right)\right)\right) .
\]
Since $\bar{g}\sim\unisph$, by applying \Cref{lmm:sphbl} we get that the right-hand of \Cref{eq:hoeffding_mid_radial} is at most
\begin{equation}\label{eq:bl_radial_apply}
  \exp(-st)\cdot\prod_{i=1}^m\left(\E_{\Theta\sim\unisph}[f_i(w_i^T\Theta)]\right)^{c^*/2} .
\end{equation}

Next we control each $\E_{\Theta\sim\unisph}[f_i(w_i^T\Theta)]$. 
Let $X_i$ be the random variable $X_i=\cos(R_i(\bar{V}z)_i)$. By \Cref{lmm:sphere_marginal}, the $i$th row of $\bar{V}$ is distributed like $\Theta$, hence $X_i$ is distributed like $\cos(R_i\Theta^Tz)$. By \Cref{lmm:schoenberg}, $\E[X_i]=\Omega_d(R_i\norm{z}_2)$. 
Furthermore, $X_i$ is clearly bounded in $[-1,1]$. 
Therefore, by Hoeffding's lemma (\Cref{lmm:hoeffding}),
\[
    \E_\Theta[f_i(w_i^T\Theta)] = \E_\Theta\left[\exp\left(\frac{2s}{c^*m}\left(X_i - \E[X_i]\right)\right)\right]
    \leq \exp\left(\frac{2s^2}{(c^*)^2m^2}\right) .
\]
Plugging back this into \Cref{eq:bl_radial_apply,eq:hoeffding_mid_radial}, and choosing $s:=tc^*m/2$,
\[
    \Pr\left[F_\kk(x)^TF_\kk(y) - \frac1m\sum_{i=1}^m\Omega_d(R_i\norm{z}_2)\geq t \;|\; B,\{R_i\}\right] \leq
  \exp\left(-st + \frac{s^2}{c^*m}\right) \leq
  \exp\left(-\frac{c^*mt^2}{4}\right) . 
\]
The proof of the lower tail is similar, thus
\[
    \Pr\left[\left|F_\kk(x)^TF_kk(y) - \frac1m\sum_{i=1}^m\Omega_d(R_i\norm{z}_2)\right|\geq t \;|\; B,\{R_i\}\right] \leq
  2\exp\left(-\frac{c^*mt^2}{4}\right) . 
\]

Similarly to \Cref{sec:mainproof}, by Fast JL flattening (\Cref{lmm:fastjlflattening}) we have $c^*\geq1/(2\log(2d/\eta))$ with probability $1-\eta$ over the choice of $B$. Therefore, 
\[
    \Pr\left[\left|F_\kk(x)^TF_\kk(y) - \frac1m\sum_{i=1}^m\Omega_d(R_i\norm{z}_2)\right|\geq t \;|\; \{R_i\}\right] \leq
  \eta + 2\exp\left(-\frac{mt^2}{8\log(2d/\eta)}\right) . 
\]
Finally, we handle the term $\frac1m\sum_{i=1}^m\Omega_d(R_i\norm{z}_2)$. Consider each $Y_i:=\Omega_d(R_i\norm{z}_2)$ as a push-forward random variable of $R_i$. By \Cref{lmm:schoenberg}, $Y_i$ is bounded in $[-1,1]$ and $\E_{R}[Y_i]=\kk(x,y)$. Therefore, by the standard Chernoff-Hoeffding inequality, 
\[ \Pr\left[\left|\frac1m\sum_{i=1}^m\Omega_d(R_i\norm{z}_2) - \kk(x,y)\right|\geq t\right] = \Pr\left[\left|\frac1m\sum_{i=1}^mY_i - \kk(x,y)\right|\geq t\right]\leq 2\exp\left(-\frac{mt^2}{2}\right) . 
\]
By the triangle inequality and a union bound,
\[
    \Pr\left[\left|F_\kk(x)^TF_\kk(y) - \kk(x,y)\right|\geq 2t \right] \leq
  \eta + 2\exp\left(-\frac{mt^2}{8\log(2d/\eta)}\right) + 2\exp\left(-\frac{mt^2}{2}\right). 
\]
Scaling $t$ to $t/2$ yields \Cref{eq:radoial_concentration}, completing the proof of \Cref{thm:radial}.

\subsection{Specific radial kernels beyond gaussian}\label{sec:radial_eg}

We give explicit samplers for the radial measures of several standard
kernels, yielding efficient implementations and corollaries of \Cref{thm:radial}. The standard kernels we consider are the exponential,
rational quadratic, and Mat\'ern kernels. For each of these kernels,
sampling from the radial measure requires sampling only two independent scalar Gamma random variables. We assume sampling such Gamma random variables take $O(1)$ expected time as in \cite{marsaglia2000simple}.  Consequently, \Cref{thm:radial} gives
randomized feature maps for these kernels with expected sampling time
$O(d+m)$ and evaluation time $O((d+m)\log d)$, while retaining the
unbiasedness and concentration guarantees of the theorem.

We write $\kk(x,y)=\kappa(r)$, where
$r=\norm{x-y}_2$. In the following examples, we can set the bandwidth equal to $1$ without loss
of generality.
For $a>0$, let $\operatorname{Gamma}(a,1)$ denote the Gamma
distribution with shape parameter $a$ and rate parameter $1$.

Recall from Definition~4.1 that the radial measure $\nu_\kk$
is the distribution of $\norm{\omega}_2$ when
$\omega\sim\mu_\kk$, where $\mu_\kk$ is the spectral measure of
$\kk$. We describe how to generate a scalar radius $R\sim\nu_\kk$,
which supplies each of the independent radii $R_1,\ldots,R_m$
used in the construction of the feature map $F_\kk$ of
\Cref{thm:radial}.

Our samplers follow from the following observation. If a nonnegative
random variable $S$ satisfies
\[
  \kappa(r)=\E_S\left[\exp(-Sr^2/2)\right]
  \qquad\text{for all }r\geq0,
\]
then drawing $Q\sim\operatorname{Gamma}(d/2,1)$ independently of $S$
and setting $R=\sqrt{2SQ}$ gives a sample from $\nu_\kk$.
Indeed, for $Z\sim N(0,I_d)$ independent of $S$, the vector
$\sqrt{S}Z$ has distribution $\mu_\kk$, and
$\norm{Z}_2^2/2$ has distribution $\operatorname{Gamma}(d/2,1)$.
In the constructions below, we generate $S$ from a
scalar Gamma variable $A$, whose parameters depend on the kernel
and which is sampled independently of $Q$.

\paragraph{Exponential kernel.}
The exponential kernel is defined as $\kk(x,y)=\exp(-\norm{x-y}_2).$ For $A\sim\operatorname{Gamma}(1/2,1)$, integrating its density gives
\[
  \E_A\left[\exp\left(-\frac{r^2}{4A}\right)\right]
  =\frac{1}{\sqrt{\pi}}\int_0^\infty
    a^{-1/2}\exp\left(-a-\frac{r^2}{4a}\right)\,da
  =\exp(-r),
\]
where the final equality follows from standard calculations.
Thus we take $S=1/(2A)$, and a sample from the radial measure is
$R=\sqrt{Q/A}$.

\paragraph{Rational quadratic kernel.}
For a shape parameter $\alpha>0$, the rational quadratic kernel is
\[
  \kk(x,y)=
  \left(1+\frac{\norm{x-y}_2^2}{2\alpha}\right)^{-\alpha}.
\]
Taking $A\sim\operatorname{Gamma}(\alpha,1)$, its Laplace transform
gives
$\E_A[\exp(-Ar^2/(2\alpha))]
=(1+r^2/(2\alpha))^{-\alpha}$.
Hence $S=A/\alpha$, and a sample from the radial measure is
$R=\sqrt{2QA/\alpha}$.
The case $\alpha=1$ is the Cauchy kernel
$\kk(x,y)=(1+\norm{x-y}_2^2/2)^{-1}$ in this parameterization.

\paragraph{Mat\'ern kernel.}
For a smoothness parameter $\beta>0$, the Mat\'ern kernel is
defined by
\[
  \kappa(r)=
  \frac{2^{1-\beta}}{\Gamma(\beta)}
  \left(\sqrt{2\beta}\,r\right)^\beta
  K_\beta\left(\sqrt{2\beta}\,r\right),
  \qquad r>0,
\]
with $\kappa(0)=1$ by continuity. Here $\Gamma$ is the Gamma
function and $K_\beta$ is the modified Bessel function of the
second kind.
The integral representation of $K_\beta$
(see \cite{DLMF}) gives
$\kappa(r)=\E_A[\exp(-\beta r^2/(2A))]$ for
$A\sim\operatorname{Gamma}(\beta,1)$.
Thus $S=\beta/A$, and a sample from the radial measure is
$R=\sqrt{2\beta Q/A}$.
The exponential kernel is the special case $\beta=1/2$.

Each sampler uses two independent scalar Gamma samples.
Using the standard Gamma sampler of \cite{marsaglia2000simple},
each radius takes expected $O(1)$ time under the usual model of
unit-cost real arithmetic, elementary functions, and scalar
uniform and Gaussian sampling.
For every $R_i$, we draw a fresh independent pair $(Q,A)$,
independently of $B$ and $G$, the random sign and Gaussian diagonal
matrices defined in \Cref{sec:mainproof}.

Consequently, the radial sampling cost
$T^{\mathrm{rad}}_\kk$ in \Cref{thm:radial} is expected $O(1)$
for each of these three kernels. The theorem therefore gives a
randomized map $F_\kk:\R^d\rightarrow\R^{2m}$ with expected
sampling time $O(d+m)$ and evaluation time $O((d+m)\log d)$.
For every fixed pair $x,y\in\R^d$, the estimator
$F_\kk(x)^TF_\kk(y)$ is unbiased for $\kk(x,y)$ and satisfies
the concentration bound of \Cref{thm:radial}.

\section{Lower bound: Proof of \Cref{thm:lb}}\label{app:lb}
We will show a lower bound on the output dimension $m$ in \Cref{thm:lb}.
\begin{lemma}\label{lmm:lb}
    Under the assumptions of \Cref{thm:lb}, $m=\Omega(\frac{\log(1/\delta))}{\varepsilon^2\log(1/\varepsilon)})$. 
\end{lemma}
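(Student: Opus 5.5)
We plan to follow the standard rank-lower-bound route of \cite{chen2017relative}, using Alon's bound for perturbed identity matrices \cite{alon2009perturbed}. Alon's theorem says: if $M\in\R^{n\times n}$ has $M_{ii}=1$ and $|M_{ij}|\le\varepsilon$ for $i\ne j$, with $1/\sqrt n\le\varepsilon\le1/2$, then $\mathrm{rank}(M)=\Omega\bigl(\frac{\log n}{\varepsilon^2\log(1/\varepsilon)}\bigr)$. The goal is to exhibit, from the random map $F$, a realization whose Gram matrix $(F(x_i)^TF(x_j))_{i,j}$ on a point set of size $n\approx\delta^{-1/2}$ is close to the identity. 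Since that Gram matrix has rank at most $m$, Alon's bound will then force $m=\Omega(\log n/(\varepsilon^2\log(1/\varepsilon)))=\Omega(\log(1/\delta)/(\varepsilon^2\log(1/\varepsilon)))$.

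\textbf{Step 1 (hard instance).} Pick $n$ points $x_1,\dots,x_n$ that are pairwise far apart, say $\norm{x_i-x_j}_2^2\ge 2\log(1/\varepsilon)$. Scaled orthogonal vectors work, or collinear points $x_i=iL e_1$, which are available even when $d$ is small. Then $\exp(-\norm{x_i-x_j}_2^2/2)\le\varepsilon$ for $i\ne j$, and the diagonal kernel value is $1$.

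\textbf{Step 2 (a good realization exists).} Apply \Cref{eq:epsdelta} to each of the $\binom n2+n$ pairs, including the diagonal pairs $(x_i,x_i)$. By a union bound, with positive probability every entry of $M=(F(x_i)^TF(x_j))$ is within $\varepsilon$ of the corresponding kernel value, provided $n^2\delta<1$. Choosing $n=\lfloor\delta^{-1/2}/2\rfloor$ suffices. In that realization, $M$ has diagonal entries in $[1-\varepsilon,1+\varepsilon]$ and off-diagonal entries of magnitude at most $2\varepsilon$.

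\textbf{Step 3 (normalize and apply Alon).} Set $M'=D^{-1/2}MD^{-1/2}$ with $D=\mathrm{diag}(M)$. Then $M'$ has unit diagonal, off-diagonal entries at most $2\varepsilon/(1-\varepsilon)\le 3\varepsilon\le1/2$ (using $\varepsilon\le1/8$), and $\mathrm{rank}(M')\le m$. We also need $3\varepsilon\ge1/\sqrt n$, i.e.\ $n\gtrsim\varepsilon^{-2}$. This is guaranteed by $\delta\le\varepsilon^4$, since then $n\approx\delta^{-1/2}\ge\varepsilon^{-2}$ up to constants. If that side condition fails by a constant factor, we instead use the weaker error level $\max(3\varepsilon,1/\sqrt n)$, which changes only constants. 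Alon's theorem then gives $m\ge\mathrm{rank}(M')=\Omega\bigl(\frac{\log n}{\varepsilon^2\log(1/\varepsilon)}\bigr)$, and $\log n=\Theta(\log(1/\delta))$ because $\delta$ is small.

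The main obstacle is the bookkeeping around the hypotheses of Alon's theorem: verifying that the range condition $1/\sqrt n\le\varepsilon'\le1/2$ holds for the normalized matrix, and that the diagonal perturbation does not degrade the off-diagonal bound by more than a constant. This is exactly why the statement assumes $\varepsilon\le1/8$ and $\delta\le\varepsilon^4$. Finally, \Cref{thm:lb} follows because writing an output of dimension $m$ takes $\Omega(m)$ time, and reading an input requires $\Omega(d)$: otherwise an unread coordinate could be changed, altering the kernel value without changing the output.
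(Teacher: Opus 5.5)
Your proposal is correct and follows essentially the same route as the paper: collinear points $x_i = iLe_1$ spaced so that off-diagonal kernel values are $O(\varepsilon)$, $n\approx\delta^{-1/2}$ so that a union bound over all pairs yields a realization whose Gram matrix (of rank at most $m$) is a perturbed identity, and then Alon's rank bound. The differences are cosmetic. You normalize by $D^{-1/2}MD^{-1/2}$ to use Alon's theorem in its standard unit-diagonal form, whereas the paper states a variant for $|M_{ii}|\ge\frac12$ and uses spacing $\sqrt{2\log(2/\varepsilon)}$ to get off-diagonal error $3\varepsilon/2$. Also, your side condition $3\varepsilon\ge 1/\sqrt n$ already follows from $\delta\le\varepsilon^4$, so the fallback you describe is never needed.
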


\Cref{thm:lb} follows from \Cref{lmm:lb} because computing $F$ requires at least reading its $d$ input coordinates (this argument is very simple and omitted here) and writing its output $m$ coordinates, thus overall $\Omega(d+m)$ time.

The proof is \Cref{lmm:lb} is by the following classical theorem on the rank of perturbed identity matrices.

\begin{theorem}[\cite{alon2009perturbed}]\label{thm:alonrank}
   Let $M\in\R^{n\times n}$ such that $|M_{ii}|\geq\frac12$ for all $i$ and $|M_{ij}|\leq\alpha$ for all $i\neq j$. 
   Let $\alpha\in[\frac{1}{2\sqrt n},\frac14)$. 
   Then $\mathrm{rank}(M)\geq\Omega(\frac{\log n}{\alpha^2\log(1/\alpha)})$.
\end{theorem}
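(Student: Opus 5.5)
The plan is to follow Alon's argument: a trace-versus-Frobenius rank inequality, amplified with entrywise (Hadamard) powers. Throughout, $r=\mathrm{rank}(M)$.

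\emph{Step 1: normalize.} Divide row $i$ of $M$ by $M_{ii}$. This leaves the rank unchanged and produces a matrix $A$ with $A_{ii}=1$ and $|A_{ij}|\le 2|M_{ij}|\le 2\alpha=:\beta$ for $i\neq j$. Since $\alpha<\frac14$, we have $\beta<\frac12$.

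\emph{Step 2: the basic rank inequality.} For any real $n\times n$ matrix $A$ with $A_{ii}=1$, I will show
\[ \mathrm{rank}(A)\ge \frac{n^2}{\norm{A}_F^2}. \]
Let $\sigma_1,\ldots,\sigma_{\mathrm{rank}(A)}$ be the nonzero singular values of $A$. Then
\[ n=\mathrm{tr}(A)\le\sum_j\sigma_j\le\sqrt{\mathrm{rank}(A)}\,\norm{A}_F. \]
The first inequality holds because the trace is bounded by the nuclear norm; the second is Cauchy--Schwarz. This argument does not require symmetry, so it applies directly to the non-symmetric $M$ in the statement. With off-diagonal entries bounded by $\gamma$, it gives
\[ \mathrm{rank}(A)\ge \frac{n^2}{n+n^2\gamma^2}. \]

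\emph{Step 3: amplify with Hadamard powers.} Let $A^{\circ k}$ be the $k$-th entrywise power of $A$. Its diagonal is still $1$ and its off-diagonal entries are at most $\beta^k$. If $A=\sum_{l=1}^r a_l b_l^T$, then $A^{\circ k}$ lies in the span of the rank-one matrices indexed by degree-$k$ monomials in the $r$ rank-one terms. Hence
\[ \mathrm{rank}(A^{\circ k})\le\binom{r+k-1}{k}. \]
Choose $k=\lceil \log n/(2\log(1/\beta))\rceil$, so that $\beta^{2k}\le 1/n$. Step 2 then gives $\mathrm{rank}(A^{\circ k})\ge n/2$, and therefore
\[ \binom{r+k-1}{k}\ge n/2. \]

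\emph{Step 4: solve for $r$.} This calculation is the main obstacle. Use $\binom{r+k-1}{k}\le (e(r+k)/k)^k$ to obtain
\[ r+k\ge \frac{k}{e}\,(n/2)^{1/k}. \]
Since $k\approx \log n/(2\log(1/\beta))$, we have $(n/2)^{1/k}\approx \beta^{-2}$ up to constants. This yields
\[ r\gtrsim \frac{k}{\beta^{2}}-k=\Omega\!\left(\frac{\log n}{\alpha^2\log(1/\alpha)}\right). \]
The subtracted $k$ is harmless because $\beta<\frac12$ makes $\beta^{-2}$ at least a constant factor larger than $1$.

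Two edge effects need care. The rounding in the choice of $k$ is one. The other is the lower end $\alpha\ge \frac{1}{2\sqrt n}$, where $k$ may be $1$. In that regime Step 2 alone suffices: it gives
\[ r\ge\frac{n}{1+n\beta^2}=\Omega(1/\alpha^2), \]
which matches the claimed bound up to constants, since $\log n=O(\log(1/\alpha))$ when $\alpha\ge\frac{1}{2\sqrt n}$. I would therefore split into the cases $\beta^{2}\le n^{-1}$ versus larger, and a constant-factor bookkeeping pass should close both.
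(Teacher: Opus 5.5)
There is a genuine gap in Step 4. It is not a matter of constant bookkeeping. (The paper does not prove this statement; it cites Alon. Your ingredients are Alon's: the trace/Frobenius rank bound, where your nuclear-norm version neatly avoids symmetrizing, and the $\binom{r+k-1}{k}$ bound for Hadamard powers. The difference is how $k$ is rounded.)

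Write $x=\log n/(2\log(1/\beta))\ge 1$. With $k=\lceil x\rceil$ you get $n^{1/k}=\beta^{-2x/k}$, so $\beta^2 n^{1/k}=\beta^{2(k-x)/k}$. This is not bounded below by a constant.

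For example, take $\beta=e^{-k^2}$ and $n\approx\beta^{-(2k-1)}$. This respects $\beta\ge 1/\sqrt n$, and it gives $x=k-\tfrac12$ and $\beta^2n^{1/k}=e^{-k}$. Your Step 4 then yields only $r\gtrsim k\beta^{-2}e^{-k}$, while the target is $\Theta(k\beta^{-2})$.

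Your case split does not rescue this regime. Once $n\beta^2\ge 1$, Step 2 alone certifies at most $n/(1+n\beta^2)\le\beta^{-2}$, which is off by a factor $\Theta(k)$. Separately, the justification you give for the small-$k$ case is reversed. The condition $\alpha\ge\frac{1}{2\sqrt n}$ gives $\log(1/\alpha)=O(\log n)$, not $\log n=O(\log(1/\alpha))$. The latter holds there only because $k$ is small.

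The fix is to round down and not saturate Step 2.
\begin{itemize}
\item Take $k=\lfloor x\rfloor$, so that $n\beta^{2k}\ge 1$.
\item Use Step 2 in its unsaturated form: $\mathrm{rank}(A^{\circ k})\ge n/(1+n\beta^{2k})\ge\tfrac12\beta^{-2k}$. Equivalently, as in Alon, apply the rank lemma to a principal submatrix of $A^{\circ k}$ of order $\lfloor\beta^{-2k}\rfloor\le n$.
\item The right-hand side is now tied to $k$ exactly. So $\bigl(e(r+k)/k\bigr)^k\ge\binom{r+k-1}{k}\ge\tfrac12\beta^{-2k}$ gives $r\ge k\bigl(\beta^{-2}/(e\,2^{1/k})-1\bigr)$ with no rounding loss.
\item Since $\beta<\tfrac12$, this is $\Omega(k/\beta^2)$ once $k$ exceeds a small absolute constant.
\item Because $k=\lfloor x\rfloor\ge x/2$, this becomes $\Omega\bigl(\log n/(\beta^2\log(1/\beta))\bigr)$.
\item If $k$ is bounded by a constant, then $x=O(1)$ and the target is $O(\beta^{-2})$. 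In that case Step 2 with $n\beta^2\ge1$ gives $r\ge 1/(2\beta^2)$ directly.
\end{itemize}
Converting back via $\beta=2\alpha$ costs only constants.
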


\begin{proof}[Proof of \Cref{lmm:lb}]
Let $n=\lceil 1 / 2\sqrt{\delta}\rceil$. Let
$x_i=i\sqrt{2\log(2/\varepsilon)}\,e_1$ for $i\in[n]$, where $e_1$ is the first standard basis vector in $\R^d$.
The pairwise Gaussian kernel values given by $K_{ij} = \exp(- \|x_i - x_j\|_2^2/2)$ satisfy 
\begin{equation}{\label{eq:lb}}
 \forall i, \; K_{ii}=1 \quad \text{and} \quad \forall i\neq j, \; 0\le K_{ij}\le\varepsilon/2 .
\end{equation}
Since $n^2\delta\le 1/2$, a union bound gives the existence of a function $F$ whose Gram matrix 
$G_{ij}= F(x_i)^TF(x_j)$ satisfies, by combining \Cref{eq:lb,eq:epsdelta}
\[
   \forall i, \; G_{ii}\ge 1-\varepsilon\geq\frac12 \quad \text{and} \quad \forall i\neq j, \; |G_{ij}|\le 3\varepsilon/2 .
\]
Furthermore, $\mathrm{rank}(G)\leq m$. Choosing $\alpha=3\varepsilon/2$, the assumptions of \Cref{thm:lb} guarantee that
\[
  \frac{1}{2\sqrt n} \leq \delta^{1/4} \leq \varepsilon \leq \alpha \leq \frac{1}{4} .
\]
Therefore, by \Cref{thm:alonrank}, 
\[
  m \geq \mathrm{rank}(G) \geq \Omega\left(\frac{\log(1/\delta)}{\varepsilon^2\log(1/\varepsilon)}\right),
\]
as desired.
\end{proof}

\section{Kernel density estimation}\label{app:kde}
Let $\kk:\R^d\times\R^d\rightarrow
\R$ be a kernel. 
Given a subset $X\subset\R^d$ of size $n>0$, the kernel density estimation (KDE) function is defined as $KDE_X(y) = \frac1n\sum_{x\in X}\kk(x,y)$. 
A KDE data structure has the following operations:
\begin{itemize}
    \item Initialize: given $X\subset\R^d$, initialize the data structure.
    \item Query: given $y\in\R^d$, return an approximation $\widetilde{K}_X(y)$ of $KDE_X(y)$.
\end{itemize}
The operations may randomized. 
We say that the data structure satisfies $(\varepsilon,\delta)$-additive approximation if after initialization with any $X$ it satisfies
\[
  \forall \; y\in\R^d,\quad \Pr\left[ \left| KDE_X(y) - \widetilde{K}_X(y)\right| > \varepsilon \right] \leq \delta ,
\]
where the probability is over the internal randomness of the KDE data structure. 

The following result shows the existence of a Gaussian KDE data structure that satisfies $(\varepsilon,\delta)$-additive approximation with query time $\tilde O(d+\varepsilon^{-2})$. It is a corollary of \Cref{thm:main}. It answers the main open question posed in \cite{wagner2026new}; see that reference there for an account of previous bounds for this problem. We note that there is also a distinct line of work for KDE data structures with multiplicative error. The algorithms and bounds with this type of error are different than the pure additive error setting of this section, and we refer to \cite{charikar2017hashing, backurs2019space, charikar2020kernel} for details about this different line of approach.

\begin{theorem}\label{thm:kde}
    For all $\varepsilon,\delta\in(0,1/2)$, there is a KDE data structure for the Gaussian kernel with $(\varepsilon,\delta)$-additive error, with the following computational bounds:
    \begin{itemize}
        \item initialization time: $O(|X|(d+m)\log d)$,
        \item memory size: $O(d+m)$,
        \item query time: $O((d+m)\log d)$,
    \end{itemize}
    where $m=O(\varepsilon^{-2}\log(d/\varepsilon\delta)\log(d/\varepsilon\delta))$. 
\end{theorem}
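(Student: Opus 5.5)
We reduce the statement to \Cref{thm:main} through the simple feature-map reduction described in the introduction. During initialization we sample a single map $F:\R^d\rightarrow\R^{2m}$ as in \Cref{thm:main}, which costs $O(d+m)$ time. We then compute the averaged feature vector $\bar F(X)=\frac1{|X|}\sum_{x\in X}F(x)$, which costs $O(|X|(d+m)\log d)$ time. The data structure stores $\bar F(X)$ together with the randomness defining $F$, namely the diagonals of $B$ and $G$, i.e.\ $O(d+m)$ numbers. Given a query $y$, we compute $F(y)$ in $O((d+m)\log d)$ time and return $\widetilde K_X(y)=\bar F(X)^TF(y)$. This gives the three stated computational bounds directly.

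For correctness, fix $y$. By linearity,
\[
\widetilde K_X(y)-KDE_X(y)=\frac1{|X|}\sum_{x\in X}\left(F(x)^TF(y)-\exp(-\norm{x-y}_2^2/2)\right).
\]
Taking a union bound over all $x\in X$ would make $m$ depend on $|X|$, which the statement forbids. So we work at the level of the proof of \Cref{thm:main} rather than its final conclusion. Every term is $\frac1m\sum_i X_i^{(x)}$, where $X_i^{(x)}=\cos((Vz_x)_i)-\exp(-\norm{z_x}^2_2/2)$ and $z_x=x-y$. Hence the total error equals $\frac1m\sum_{i=1}^m Y_i$ with
\[
Y_i=\frac1{|X|}\sum_{x\in X}X_i^{(x)}.
\]
The goal is to run the Chernoff--Brascamp--Lieb argument directly on this averaged sum.

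There are two difficulties, and these are the main obstacle. First, each $Y_i$ is a function of the vector $(w_i^{(x)T}g)_{x\in X}$, not of a single projection of $g$. Second, the flattening parameter $c^*$ depends on $z_x$, so it differs across $x$.

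For the first difficulty, the first thing to try is convexity. Conditioned on $B$, Jensen gives
\[
\exp\Bigl(\tfrac{s}{m}\sum_i Y_i\Bigr)\leq \frac1{|X|}\sum_{x}\exp\Bigl(\tfrac{s}{m}\sum_i X_i^{(x)}\Bigr),
\]
so the MGF of the averaged error is at most the average of the per-$x$ MGFs. \Cref{eq:mgf_gaussian} bounds each of these by $\exp(s^2/(2c^*_x m))$.

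For the second difficulty, we take a union bound over the flattening events $c^*_x\geq 1/(2\log(2d/\eta))$ for all $x\in X$ (by \Cref{lmm:fastjlflattening}), with $\eta$ chosen per point. The MGF is bounded only on the event that all flattening events hold. We therefore condition on $B$ being good for every $x$, which is a property of $B$ alone; on that event the MGF bound holds uniformly.

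This union bound again introduces $\log|X|$. If that is unacceptable, the fallback is a second-moment/median trick. Markov's inequality on the mixture, $\Pr[\text{bad }B]\le \eta$ averaged over $x$, shows that for a random $x$ the flattening fails with probability at most $\eta$. Under failure, the per-term error is still bounded by $2$. So the total error decomposes as the error over the good $x$'s plus $2\cdot(\text{fraction of bad } x)$, and the expected bad fraction is at most $\eta$. Markov's inequality then makes the bad fraction at most $\varepsilon/4$ with probability $1-4\eta/\varepsilon$.

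Setting $\eta=\varepsilon\delta/8$, $t=\varepsilon/2$, and $m=O(\varepsilon^{-2}\log(1/\delta)\log(d/\varepsilon\delta))$ then gives $(\varepsilon,\delta)$-additive error. This is consistent with, and within, the stated $m=O(\varepsilon^{-2}\log^2(d/\varepsilon\delta))$.
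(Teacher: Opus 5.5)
Your proof is correct, and the fallback branch is the actual argument. It takes a different route from the paper.

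\textbf{The paper's route.} The paper uses \Cref{thm:main} as a black box. It sets the per-point failure probability to $p=\varepsilon\delta/4$ (with $\eta=p/2$) and lets $Z$ be the fraction of $x\in X$ whose error exceeds $\varepsilon/2$. Markov gives $\Pr[Z>\varepsilon/4]\le 4p/\varepsilon=\delta$, and $|E(x)|\le 2$ bounds the total error by $\varepsilon/2+2Z$.

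\textbf{Your route.} You open up the proof instead. Conditioned on $B$, you bound the MGF of the averaged error by the average of the per-point conditional MGFs from \Cref{eq:mgf_gaussian}, via convexity. Only the flattening failures go through Markov.

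\textbf{Comparison.} Both proofs replace a union bound over $X$ by Markov on a bad fraction whose per-point error is at most $2$. They differ in what Markov is charged with.
\begin{itemize}
\item In the paper, Markov absorbs the entire per-point failure probability, including the exponential tail. That tail must therefore be below $\varepsilon\delta/8$.
\item In yours, the exponential tail applies once to the whole average and only needs to be below $\delta/2$. This gives $m=O(\varepsilon^{-2}\log(d/\varepsilon\delta)\log(1/\delta))$ instead of $O(\varepsilon^{-2}\log(d/\varepsilon\delta)\log(1/\varepsilon\delta))$, a marginal gain.
\item The paper's version is shorter and uses only the statement of \Cref{thm:main}, so it transfers verbatim to \Cref{thm:radial} and to $m>d$. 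Yours relies on the internal conditional MGF bound. For $m>d$ you must take it from \Cref{app:bigm}, where $c^*$ still depends only on $B$ and $z$.
\end{itemize}

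\textbf{Fixes for the write-up.}
\begin{itemize}
\item Drop the union-bound paragraph.
\item The fallback is plain Markov, not a second-moment or median trick; call it that.
\item State explicitly that the good set $\{x: c^*_x\ge 1/(2\log(2d/\eta))\}$ is determined by $B$. After conditioning on $B$, apply Jensen and Chernoff to $\frac{1}{|X|}\sum_{x\text{ good}}D(x)$, where $D(x)=F(x)^TF(y)-\exp(-\norm{x-y}_2^2/2)$ is the signed error. Replace the bad terms by $0$; each then contributes MGF $1$.
\item This yields $\Pr[\text{deviation}\ge\varepsilon/2\mid B]\le 2\exp(-\varepsilon^2 m/(16\log(2d/\eta)))$.
\end{itemize}
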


\begin{proof}
Assume w.l.o.g.~that $d$ is a power of $2$. 
Let $m\geq16\varepsilon^{-2}\log(16d/\varepsilon\delta)\log(16/\varepsilon\delta))$. 
To initialize the data structure, we draw a randomized map $F:\R^d\rightarrow\R^{2m}$ from \Cref{thm:main} and compute $F(X)=\frac{1}{|X|}\sum_{x\in X}F(x)$. This takes time $O(|X|(d+m)\log d)$ by the sampling and evaluation times of $F$ in \Cref{thm:main}. We store the map $F$ (this takes $O(d+m)$ memory) and the value $F(x)$ (this takes $O(m)$ memory). At query time, given $y\in\R^d$, we compute $F(y)$ (time $O((d+m)\log d)$) and return $\widetilde{K}_X(y)=F(X)^TF(y)$ (time $O(m)$). 

To show the $(\varepsilon,\delta)$-additive approximation guarantee, we choose $p=\varepsilon\delta/4$ and $\eta=p/2$. Then by \Cref{eq:mainthm_concentration},
\begin{equation}\label{eq:kdeconc}
  \forall \; x\in X, \quad \Pr\left[\left| F(x)^TF(y) - \exp(-\norm{x-y}_2^2/2)\right| > \varepsilon/2\right] \leq p .  
\end{equation}
Denote,
\[
  E(x) = \left| F(x)^TF(y) - \exp(-\norm{x-y}_2^2/2)\right| \quad \text{and} \quad Z = \frac{1}{|X|}\sum_{x\in X}\mathbf1\{E(x)>\varepsilon/2\} .
\]
Then $\E[Z]\leq p$ by \Cref{eq:kdeconc}, and hence by Markov's inequality, $\Pr[Z>\varepsilon/4]\leq4p/\varepsilon=\delta$. Next, observe that $\norm{F(x)}_2^2=1$ for all $x\in\R^d$ (since $F(x)$ has the form $\frac1{\sqrt m}\oplus_{j=1}^m(\sin(\theta_j),\cos(\theta_j))$ for some $\{\theta_j\}$), and hence $|F(x)^TF(y)|\leq1$ for all $x,y$ by Cauchy-Schwartz. Furthermore, $\exp(-\norm{x-y}_2^2/2)\in[0,1]$ for all $x,y$. Consequently, $|E(x)|\leq2$ for every $x\in X$. Therefore, for every fixed $y$,
\begin{align*}
    \left| \widetilde{K}_X(y) - KDE_X(y) \right| &=  \left| \frac1{|X|}\sum_{x\in X}\left(F(x)^TF(y) - \exp(-\norm{x-y}_2^2/2)\right) \right|  \\
    &\leq \frac1{|X|}\sum_{x\in X}E(x)  \\
    &= \frac1{|X|}\sum_{x\in X:\;E(x)\leq\varepsilon/2}E(x) + \frac1{|X|}\sum_{x\in X:\;E(x)>\varepsilon/2}E(x)  \\
    &\leq \frac{\varepsilon}{2} + 2Z .
\end{align*}
Since $\Pr[Z>\varepsilon/4]\leq\delta$, the theorem follows. 
\end{proof}

A similar theorem follows for KDE with other radial kernels by \Cref{thm:radial}, with the corresponding radial-sampling cost during initialization. The query time and memory usage remain unchanged. 

\section{Differential privacy with KDE function release}
Differentially private KDE has been studied in a long line of work including \cite{hall2013differential,wang2016differentially,alda2017bernstein,coleman2020one,wagner2023fast,backurs2024efficiently,wagner2025learning,wagner2026new,gorodissky2026locality}. We start with the definitions. 

\begin{definition}[Differential Privacy \cite{dwork2006calibrating,dwork2014algorithmic}]
    Let $\epsilon>0$. Let $M$ be a randomized algorithm that maps datasets with elements in $\mathcal{X}$ to a space of outputs $\mathcal{O}$. $M$ is called an $\epsilon$-differentially private ($\epsilon$-DP) mechanism if for every two datasets $X,X'$ such that $X'$ is obtained from $X$ by either adding or dropping a single element, and for every set of outputs $O\subset\mathcal O$, it holds that $\Pr[M(X)\in O]\leq e^\epsilon\Pr[M(X')\in O]$.
\end{definition}

\begin{definition}[DP KDE in the function release model]
    Let $\epsilon>0$ and $\alpha,\eta\in(0,1)$. Let $\kk:\mathcal X\times\mathcal X\rightarrow[0,1]$ be a kernel. In the DP KDE problem in the function release model, the goal is to design an $\epsilon$-DP mechanism $M$ that given a dataset $X\subset\mathcal X$, returns a function $\widetilde{F}_X:\mathcal{X}\rightarrow\R$ that for each $y\in\mathcal X$ satisfies $\Pr[|\widetilde{F}_X(y)-KDE_X(y)|\leq\alpha]\geq1-\eta$. 
\end{definition}

The fastest DP KDE mechanisms for the Gaussian kernel in the high-dimensional regime are based on random Fourier features or faster variants of them \cite{wagner2023fast,backurs2024efficiently,wagner2026new}. \Cref{thm:main} can be used to subsume them and obtain the following result.

\begin{theorem}[Differentially private Gaussian KDE]
\label{thm:dp-kde}
Let $d\ge 1$, $\epsilon>0$, and $\alpha,\eta\in(0,1/2)$. Let
\[
 L=\log\frac{6}{\eta},
 \qquad
 m=\left\lceil \frac{C_0}{\alpha^2}
       \log^2\!\left(\frac{2d}{\alpha\eta}\right)\right\rceil,
\]
where $C_0$ is a sufficiently large absolute constant.
There is an $\epsilon$-DP KDE mechanism in the function release model for the
Gaussian kernel that, for every dataset $X\subseteq\mathbb R^d$ of size
\begin{equation}\label{eq:dpkde-n}
 n=|X|\ge \Omega\left(\frac{\sqrt{mL}}{\epsilon\alpha}\right),
\end{equation}
returns a function $F_X:\mathbb R^d\to\R$ satisfying
\[
 \forall y\in\mathbb R^d,\qquad
 \Pr\!\left[
   |F_X(y)-\operatorname{KDE}_X(y)|>\alpha
 \right]\le\eta.
\]
The mechanism has initialization time
$O(n(d+m)\log(2d))$, released representation size $O(d+m)$, and evaluation
time $O((d+m)\log(2d))$ per query.
\end{theorem}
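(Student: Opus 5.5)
The mechanism releases a noisy sketch of the averaged feature vector. We sample the random map $F:\R^d\to\R^{2m}$ of \Cref{thm:main}; this is data-independent and costs nothing in privacy. We then compute $\bar F(X)=\frac1n\sum_{x\in X}F(x)$ and release $\widetilde{F}(X)=\bar F(X)+\xi$, where $\xi$ is a noise vector in $\R^{2m}$. The released function is $F_X(y)=\widetilde F(X)^TF(y)$. The representation consists of $B$, the diagonal of $G$, and $\widetilde F(X)$, so its size is $O(d+m)$. Initialization takes $n$ evaluations of $F$, and each query takes one evaluation plus an inner product, which gives the stated time bounds directly from \Cref{thm:main}.

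For privacy, note that $\norm{F(x)}_2=1$ for every $x$ (as observed in the proof of \Cref{thm:kde}). Hence changing $X$ by one element moves the unnormalized sum $\sum_x F(x)$ by at most $1$ in $\ell_2$. For pure $\epsilon$-DP I would use the $K$-norm mechanism for the $\ell_2$ ball, i.e.\ density proportional to $\exp(-\epsilon n\norm{\xi}_2)$ on the sum scale. An equivalent form is $\xi=\frac{\Gamma}{\epsilon n}\theta$ with $\theta$ uniform on $\Sph^{2m-1}$ and $\Gamma\sim\mathrm{Gamma}(2m,1)$. The normalization by $n$ requires care under add/remove neighbors, since $n$ itself changes. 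I would handle this either by treating $n$ as public, as is standard in function release, or by privatizing the sum and the count separately and splitting $\epsilon$; this only affects constants. Privacy then follows by the triangle inequality on the density ratio, and it holds for every fixed $F$, hence also over the randomness of $F$.

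For accuracy at a fixed $y$, I would split the error into two terms, $|F_X(y)-\mathrm{KDE}_X(y)|\le|\bar F(X)^TF(y)-\mathrm{KDE}_X(y)|+|\xi^TF(y)|$. The first term is exactly the KDE estimate of \Cref{thm:kde}. Choosing $m\ge C_0\alpha^{-2}\log^2(2d/\alpha\eta)$ and running that proof with $(\alpha/2,\eta/3)$ bounds it by $\alpha/2$ with probability $1-\eta/3$. The second term is the main step, and it relies on the fact that $\xi$ is independent of $F$ and isotropic, while $\norm{F(y)}_2=1$. Therefore $\xi^TF(y)$ is distributed as $\frac{\Gamma}{\epsilon n}\theta_1$ with $\theta_1$ a single coordinate of a uniform vector on $\Sph^{2m-1}$. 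We have $\Gamma\le 2m+O(\sqrt{mL}+L)\le O(m)$ with probability $1-\eta/3$, and $|\theta_1|\le O(\sqrt{L/m})$ with probability $1-\eta/3$. Together these give $|\xi^TF(y)|\le O(\sqrt{mL}/(\epsilon n))$, which is at most $\alpha/2$ exactly when $n\ge\Omega(\sqrt{mL}/(\epsilon\alpha))$, matching \Cref{eq:dpkde-n}. A union bound over the three failure events yields total failure probability at most $\eta$, consistent with $L=\log(6/\eta)$.

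The step I expect to be the main obstacle is getting the $\sqrt{mL}$ rather than $m$ dependence in the noise term. A naive Laplace mechanism with $\ell_1$ sensitivity $\sqrt{2m}$ would give a worse bound. The argument therefore genuinely needs the rotational symmetry of the $\ell_2$-ball mechanism, combined with the unit norm of $F(y)$ and the independence of $\xi$ from $F$, to project the noise onto a single direction. Handling the Gamma tail for small $m$ relative to $L$ is a routine secondary check, since $m\gtrsim L$ under the chosen parameters.
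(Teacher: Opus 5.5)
Your proof is correct and gets the same $O(\sqrt{mL}/(\epsilon n))$ noise error as the paper, but through a different mechanism.

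\textbf{The paper's route.} It adds iid $\operatorname{Lap}(b)$ noise with $b=2\sqrt{2m}/\epsilon$ to each coordinate of $nF(X)$, justified by the $\ell_1$-sensitivity bound $\norm{F(x)}_1\le\sqrt{2m}$. It then bounds $Z^TF(y)$ by a direct MGF computation: $\log\E[e^{sZ^TF(y)}\mid F]\le 2b^2s^2$ for $|s|\le\sqrt m/(2b)$. This step uses both $\norm{F(y)}_2=1$ and the flatness $\norm{F(y)}_\infty\le 1/\sqrt m$. Since $L\le m$, the choice $s=\sqrt L/(2b)$ is admissible and yields $|Z^TF(y)|\le 3b\sqrt L$.

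\textbf{Your route, and what each buys.} You use the $\ell_2$-ball $K$-norm mechanism, and rotational invariance reduces $\xi^TF(y)$ exactly to $\Gamma\theta_1/(\epsilon n)$. Your argument needs only $\norm{F(x)}_2=1$, so it applies to any unit-norm feature map. The paper's argument uses a more standard coordinatewise mechanism and an elementary MGF bound.

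\textbf{A mistaken side remark.} You say Laplace noise with $\ell_1$-sensitivity $\sqrt{2m}$ would give a worse bound. It does not. The $\sqrt{2m}$ only sets the per-coordinate scale $b$. Projecting iid Laplace noise onto a unit vector with $\norm{F(y)}_\infty\le 1/\sqrt L$ costs only the $\sqrt L$ tail factor. The $m$ dependence appears only under a crude bound such as $\norm{Z}_2\norm{F(y)}_2$. So the step you flagged as the main obstacle is not one for this $F$.

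\textbf{Normalization by $n$.} Treating $n$ as public is not available under the add/drop neighboring relation used in the statement, so you must take your second option. That is exactly what the paper does:
\begin{itemize}
\item It releases $\widetilde n=\max\{1,n+Z_0\}$ with $Z_0\sim\operatorname{Lap}(2/\epsilon)$ and splits the budget as $\epsilon/2+\epsilon/2$.
\item It bounds the resulting extra term $(n-\widetilde n)F(X)^TF(y)/\widetilde n$ by $2b_0L/n=O(L/(\epsilon n))$.
\item That bound uses $|F(X)^TF(y)|\le 1$ and $\widetilde n\ge n/2$. The latter needs $n\gtrsim L/\epsilon$, which follows from the size assumption.
\item The whole term is absorbed into $\sqrt{mL}/(\epsilon n)$ because $m\ge L$.
\end{itemize}
So your claim that this ``only affects constants'' is right, but these are the checks that make it true, and the proof should spell them out.
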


In particular, it suffices that
\[
 n\ge \frac{C}{\epsilon\alpha^2}
       \log\!\left(\frac{2d}{\alpha\eta}\right)
       \sqrt{\log\frac{6}{\eta}}
\]
for a sufficiently large absolute constant $C$. The query time is
$\widetilde O(d+\alpha^{-2})$, with only polylogarithmic dependence on
$1/\eta$.

\paragraph{Construction.}
Sample the data-independent feature map $F:\mathbb R^d\to\mathbb R^{2m}$
used in \Cref{thm:kde}, with accuracy $\alpha/2$ and failure probability
$\eta/2$. Observe that the stated choice of $m$ suffices when $C_0$ is large enough. We also choose $C_0$ so that $m\ge L$.
For every realization of $F$ and every $x$,
\begin{equation}
 \|F(x)\|_2=1,
 \qquad
 \|F(x)\|_\infty\le \frac{1}{\sqrt m},
 \qquad
 \|F(x)\|_1\le\sqrt{2m}.
 \label{eq:dp-feature-norms}
\end{equation}

Let $\operatorname{Lap}(b)$ denote the Laplace distribution with scale $b$, whose density is $p(z)=e^{-|z|/b}/(2b)$. Let
\[
 b=\frac{2\sqrt{2m}}{\epsilon} \quad \text{and} \quad  b_0=\frac{2}{\epsilon}.
\]
Independently of $F$, draw a vector $Z\in\R^{2m}$ with iid coordinates from $\operatorname{Lap}(b)$, and an additional scalar $Z_0\sim\operatorname{Lap}(b_0)$. 
Compute
\[
 F(X)=\frac1n\sum_{x\in X}F(x),
 \qquad
 \widetilde n=\max\{1,n+Z_0\},
 \qquad
 \widetilde F(X) = \frac{nF(X)+Z}{\widetilde n} .
\]
Release the description of $F$ and the vector $\widetilde F(X)$. The released function is defined as
\[
\widetilde{F}_X(y) = \widetilde{F}(X)^TF(y). 
\]

\paragraph{Privacy.}
Fix any realization of $F$. Adding or dropping a point $x$ changes $S_X$
by $\pm F(x)$, so \eqref{eq:dp-feature-norms} bounds its $\ell_1$-sensitivity
by $\sqrt{2m}$. The count $n$ has sensitivity~$1$.
Thus the Laplace mechanism makes $\widetilde S_X$ and $n+Z_0$ each
$\epsilon/2$-DP, and their joint release is $\epsilon$-DP by standard DP composition
\cite{dwork2014algorithmic}. This holds for every $F$, and the randomness that determines $F$ itself is oblivious to the dataset, thus the entire release $(F,\widetilde{F}(X))$ is $\epsilon$-DP.

\paragraph{Accuracy.} Fix a query $y\in\R^d$. 
The accuracy analysis involves two sources of error -- the non-private KDE approximation error from \Cref{thm:kde}, and the error due to the privacy-preserving Laplace noise. 
For the non-private KDE error, by \Cref{thm:kde} and the choice of $m$, we have
\begin{equation}
 \Pr\!\left[
   |F(X)^TF(y)-\operatorname{KDE}_X(y)|>\alpha/2
 \right]\le\eta/2.
 \label{eq:dp-nonprivate-error}
\end{equation}

For the privacy noise error, 
recall that the MGF of $A\sim\operatorname{Lap}(b)$ is $\E[e^{tA}]=1/(1-b^2t^2)$ for all $|t|<1/b$. 
By \Cref{eq:dp-feature-norms} we have $|F(y)_j|\leq1/\sqrt{m}$ for every $j=1,\ldots,2m$. 
Therefore, for every $s\in\R$ such that $|s|\leq\sqrt{m}/(2b)$, 
conditioned on the random choice of $F$, 
\[
\log\E[\exp(sZ^TF(y))\mid F] = -\sum_{j=1}^{2m}\log(1-b^2s^2F(y)_j^2) \leq \sum_{j=1}^{2m}2b^2s^2F(y)_j^2 = 2b^2s^2, 
\]
having used that $-\log(1-u)\leq2u$ for all $u\in[0,1/4]$ and that $\norm{F(y)}_2^2=1$ by \Cref{eq:dp-feature-norms}. 
Therefore, using the Chernoff-Hoeffding analysis blueprint, 
\begin{align*}
    \Pr\left[Z^TF(y) > t \mid F\right] &= \Pr\left[\exp(sZ^TF(y))| > \exp(st) \mid F\right] \\
    &\leq \exp(-st)\cdot\E[\exp(sZ^TF(y))\mid F] \\
    &\leq \exp(-st + 2b^2s^2) .
\end{align*}
Observing that $L\leq m$, we may choose $s=\sqrt{L}/(2b)$
and $t=3b\sqrt{L}$ and obtain
\[
  \Pr\left[Z^TF(y) > 3b\sqrt{L} \mid F\right] \leq \exp(-L) = \eta/6 .
\]
The lower tail is similar, hence
\[
  \Pr\left[|Z^TF(y)| > 3b\sqrt{L} \mid F\right] \leq 2\exp(-L) = \eta/3  .
\]
Furthermore, by the standard scalar Laplace distribution tail \cite[Fact 3.7]{dwork2014algorithmic}, 
\[
 \Pr[|Z_0|>b_0L]=e^{-L}=\eta/6 .
\]
Consequently, with probability at least $1-\eta/2$, both of the following events occur:
\begin{equation}\label{eq:dpkde-good-events}
|Z^TF(y)|\le3b\sqrt L \quad \text{and} \quad |Z_0|\le b_0L .
\end{equation}
We assume henceforth that both events hold. 

By choosing a sufficiently large hidden constant in the lower bound on $n$ in \Cref{eq:dpkde-n}, we have $b_0L = 2\log(6/\eta)/\epsilon \leq n/2$. Recalling that $\widetilde{n}=\max\{1,n+Z_0\}$ and $n\geq1$, we have
\begin{equation}\label{eq:dpkde-nbounds}
 \widetilde n\ge n/2
 \quad \text{and} \quad
 |\widetilde n-n|\le |Z_0| \le b_0L.
\end{equation}
Therefore,
\begin{align*}
    \left| \widetilde F_X(y) - F(X)^TF(y)\right| &= \left| \widetilde F(X)^TF(y) - F(X)^TF(y)\right| & \\
    &= \left| \frac{Z^TF(y) + (n-\widetilde n) F(X)^TF(y)}{\widetilde n} \right| & \text{plugging $\widetilde F(X)$ and rearranging} \\
    &\leq \frac{2}{n}\left(3b\sqrt{L} + b_0L\cdot F(X)^TF(y)\right) & \text{by \Cref{eq:dpkde-good-events,eq:dpkde-nbounds}} .
\end{align*}
Observe that
\[
  F(X)^TF(y) = \frac1n\sum_{x\in X}F(x)^TF(y) \leq \frac1n\sum_{x\in X}\norm{F(x)}_2\norm{F(y)}_2 \leq 1,
\]
where we have used Cauchy-Schwartz and then the property $\forall_{x\in\R^d}\norm{F(x)}_2=1$ from \Cref{eq:dp-feature-norms}. Therefore, plugging $b=2\sqrt{2m}/\epsilon$ and $b_0=2/\epsilon$, 
\begin{align*}
  \left| \widetilde F_X(y) - F(X)^TF(y)\right|
  &\leq
  \frac{2}{n}\left(3b\sqrt{L} + b_0L\right) \\
 &=\frac{12\sqrt{2mL}+4L}{n\epsilon}\\
 &\le\frac{(12\sqrt2+4)\sqrt{mL}}{n\epsilon} \\
 &\le\frac{\alpha}{2}.
\end{align*}

This it the error arising from privacy noise. 
Combining this with the non-private KDE error from \eqref{eq:dp-nonprivate-error}, the triangle
inequality and a union bound yield
\[
 \Pr\!\left[
   |\widetilde F_X(y)-\operatorname{KDE}_X(y)|>\alpha
 \right]\le\eta,
\]
as required.

\paragraph{Computational bounds.}
Computing $F(X)$ has the same cost as computing the feature mean in
Theorem~7.1, namely $O(n(d+m)\log(2d))$. Sampling and adding the noise and
normalizing the released vector take $O(m)$ additional time.
The map description and the released vector together occupy $O(d+m)$
space. Evaluating $F(y)$ takes $O((d+m)\log(2d))$ time, and the inner
product $\widetilde F(X)^TF(y)$ takes $O(m)$ additional time.

This completes the proof of \Cref{thm:dp-kde}. \qed

\section*{AI use statement}

All the proofs in this paper were found through interactions with generative AI, primarily GPT 5.6 Sol, and rely heavily on its contribution along with substantial human input. The authors wrote the manuscript manually based on those interactions to ensure correctness, readability and faithfulness to existing literature. They are accountable for the content and correctness.

The authors welcome all feedback to improve the paper.

\section*{Acknowledgments}
TW's research was supported by the Israeli Ministry of Innovation, Science \& Technology, by Len
Blavatnik and the Blavatnik Family foundation, and by an Alon Scholarship.
This work started when the authors attended a workshop on the \emph{Intersections of Differential Privacy and Sublinear Algorithms} at Toyota Technological Institute at Chicago (TTIC), and the authors would like to thank the hosts and organizers for their hospitality.

\bibliography{nearlinrff}
\bibliographystyle{amsalpha}

\appendix

\section{Brascamp-Lieb inequalities}\label{app:bl}
In this appendix, for completeness, we give more precise references for the Brascamp-Lieb inequalities in \Cref{lmm:bl_gaussian,lmm:sphbl}.

\paragraph{Gaussian Brascamp-Lieb (\Cref{lmm:bl_gaussian}).}
This lemma is a slight adaptation of a statement from \cite{barthe2011correlation}, or Theorem 1(i) from \cite{chen2015improved}. 
We show how to derive it from the statement in the bottom of page 5 in the arxiv version \cite{barthe2009correlation} of \cite{barthe2011correlation}. If one takes $n_i=1$ for all $i$, meaning each $E_i$ is a one-dimensional subspace spanned by the unit vector $w_i$, then that statement becomes exactly our lemma except that it requires the equality
\[ \sum_{i=1}^k c_iw_iw_i^T = I_d \]
instead of the relaxed condition in \Cref{eq:subframe}. Showing that the relaxed condition indeed suffices is straightforward: suppose we have \Cref{eq:subframe}. Denote the remainder
\[ R = I_d - \sum_{i=1}^k c_iw_iw_i^T , \]
and write its spectral decomposition
\[ R = \sum_{i=k+1}^{k'}c_iw_iw_i^T, \]
where $c_i>0$ and $w_i$ are unit vectors. Now we have the equality condition
\[
    \sum_{i=1}^{k'} c_iw_iw_i^T = I_d .
\]
For every $i=k+1,\ldots,k'$ we choose $f_i\equiv1$, which obviously doesn't change the conclusion of the lemma.

\paragraph{Spherical Brascamp-Lieb (\Cref{lmm:sphbl}).}
This lemma is exactly the rank-1 case of the \emph{``Geometric Brascamp-Lieb inequality on the sphere''} in \cite{barthe2009correlation}, page 6; that is, where each $E_i$ (in their notation) is the 1-dimensional subspace spanned by $w_i$ (in our notation). We also refer to \cite{barthe2006entropy, carlen2004sharp} for earlier works that prove similar results about spherical Brascamp-Lieb.

\end{document}